\icmltitlerunning{\bf Geometric Median ($\gm$) Matching for Robust Data Pruning}
\newcommand{\gm}{\textsc{Gm}}
\newcommand{\gmm}{\textsc{Gm Matching}}
\newcommand{\Mu}{\bm{\mu}}
\setlist{leftmargin=3mm}
\theoremstyle{plain}
\newtheorem{theorem}{Theorem}[]
\newtheorem{lemma}[]{Lemma}
\newtheorem{definition}{Definition}[]
\def\1{\mathbf{1}}
\def\vtheta{{\boldsymbol{\theta}}}
\def\vm{{\mathbf{m}}}
\def\vx{{\mathbf{x}}}
\def\vz{{\mathbf{z}}}
\def\mB{{\mathbf{B}}}
\DeclareMathAlphabet{\mathsfit}{\encodingdefault}{\sfdefault}{m}{sl}
\SetMathAlphabet{\mathsfit}{bold}{\encodingdefault}{\sfdefault}{bx}{n}
\def\gB{{\mathcal{B}}}
\def\gD{{\mathcal{D}}}
\def\gG{{\mathcal{G}}}
\def\gH{{\mathcal{H}}}
\def\gM{{\mathcal{M}}}
\def\gO{{\mathcal{O}}}
\def\gS{{\mathcal{S}}}
\def\gY{{\mathcal{Y}}}
\def\sR{{\mathbb{R}}}
\newcommand{\E}{\mathbb{E}}
\DeclareMathOperator*{\argmax}{arg\,max}
\DeclareMathOperator*{\argmin}{arg\,min}
\lstdefinestyle{mystyle}{
    backgroundcolor=\color{backcolour},   
    commentstyle=\color{codegreen},
    keywordstyle=\color{magenta},
    numberstyle=\tiny\color{codegray},
    stringstyle=\color{codepurple},
    basicstyle=\ttfamily\footnotesize,
    breakatwhitespace=false,         
    breaklines=true,                 
    captionpos=b,                    
    keepspaces=true,                 
    numbers=left,                    
    numbersep=5pt,                  
    showspaces=false,                
    showstringspaces=false,
    showtabs=false,                  
    tabsize=2,
    frame=single,
}
\definecolor{LightCyan}{rgb}{0.9,1,1}
\definecolor{light-gray}{gray}{0.8}
\definecolor{codegreen}{rgb}{0,0.6,0}
\definecolor{codegray}{rgb}{0.5,0.5,0.5}
\definecolor{codepurple}{rgb}{0.58,0,0.82}
\definecolor{backcolour}{rgb}{0.95,0.95,0.92}
\begin{document}

\twocolumn[
    \icmltitle{\bf Geometric Median ($\gm$) Matching for Robust Data Pruning}
    
    \icmlsetsymbol{equal}{*}
    \begin{icmlauthorlist}
    \icmlauthor{Anish Acharya}{x}
    \icmlauthor{Inderjit S Dhillon}{x,y}
    \icmlauthor{Sujay Sanghavi}{x,z}
    % \icmlauthor{Firstname4 Lastname4}{sch}
    % \icmlauthor{Firstname5 Lastname5}{yyy}
    % \icmlauthor{Firstname6 Lastname6}{sch,yyy,comp}
    % \icmlauthor{Firstname7 Lastname7}{comp}
    % %\icmlauthor{}{sch}
    % \icmlauthor{Firstname8 Lastname8}{sch}
    % \icmlauthor{Firstname8 Lastname8}{yyy,comp}
    %\icmlauthor{}{sch}
    %\icmlauthor{}{sch}
    \end{icmlauthorlist}
    \icmlaffiliation{x}{UT Austin}
    \icmlaffiliation{y}{Google}
    \icmlaffiliation{z}{Amazon}
    % \icmlaffiliation{w}{Eidon AI}
    \icmlcorrespondingauthor{Anish Acharya}{anishacharya@utexas.edu}
    \icmlkeywords{Machine Learning, ICML}
    \vskip 0.3in
]
\printAffiliationsAndNotice{}

\begin{abstract}
Large-scale data collections in the wild, are invariably noisy. Thus developing data pruning strategies that remain robust even in the presence of corruption is critical in practice. In this work, we propose Geometric Median ($\gm$) Matching -- a herding style greedy algorithm that yields a $k$-subset such that the mean of the subset approximates the geometric median of the (potentially) noisy dataset. Theoretically, we show that $\gm$ Matching enjoys an improved $\gO(1/k)$ scaling over $\gO(1/\sqrt{k})$ scaling of uniform sampling; while achieving {\bf optimal breakdown point} of {\bf 1/2} even under {\bf arbitrary} corruption. Extensive experiments across several popular deep learning benchmarks indicate that $\gm$ Matching consistently improves over prior state-of-the-art; the gains become more profound at high rates of corruption and aggressive pruning rates; making $\gm$ Matching a strong baseline for future research in robust data pruning.
\end{abstract}

\section{Background}
Data pruning, the (combinatorial) task of downsizing a large training set into a small informative subset~\citep{feldman2020core, agarwal2005geometric, muthukrishnan2005data, har2011geometric, feldman2011unified}, is a promising approach for reducing the enormous computational and storage costs of modern deep learning.
Consequently, a large body of recent works have been proposed to solve the data selection problem. At a high level, data pruning approaches rely on some carefully designed {\bf pruning metrics} and rank the training samples based on the scores and retain a fraction of them as representative samples (super samples), used for training the downstream model. For example,~\citep{xia2022moderate, joshi2023data, sorscher2022beyond} calculate the importance score of a sample in terms of the distance from the centroid of its corresponding class marginal.
Samples closer to the centroid are considered most prototypical (easy) and those far from the centroid are treated as least prototypical (hard). Canonically, similar scoring criterion have been developed in terms of gradients~\citep{paul2021deep}, uncertainty~\citep{pleiss2020identifying}, forgetfulness~\citep{toneva2018empirical}. It is worth noting that the distance-based score is closely related to the uncertainty / gradient forgetting based score. Samples close (far away) to the class centroid are often associated with smaller (larger) gradient norm ; lower (higher) uncertainty; harder (easier) to forget~\citep{paul2021deep, sorscher2022beyond, xia2022moderate}. 

\begin{figure*} 
\centering
\subfloat[\textsc{Easy ($\psi=0$)}]
{\includegraphics[width=0.32\textwidth]{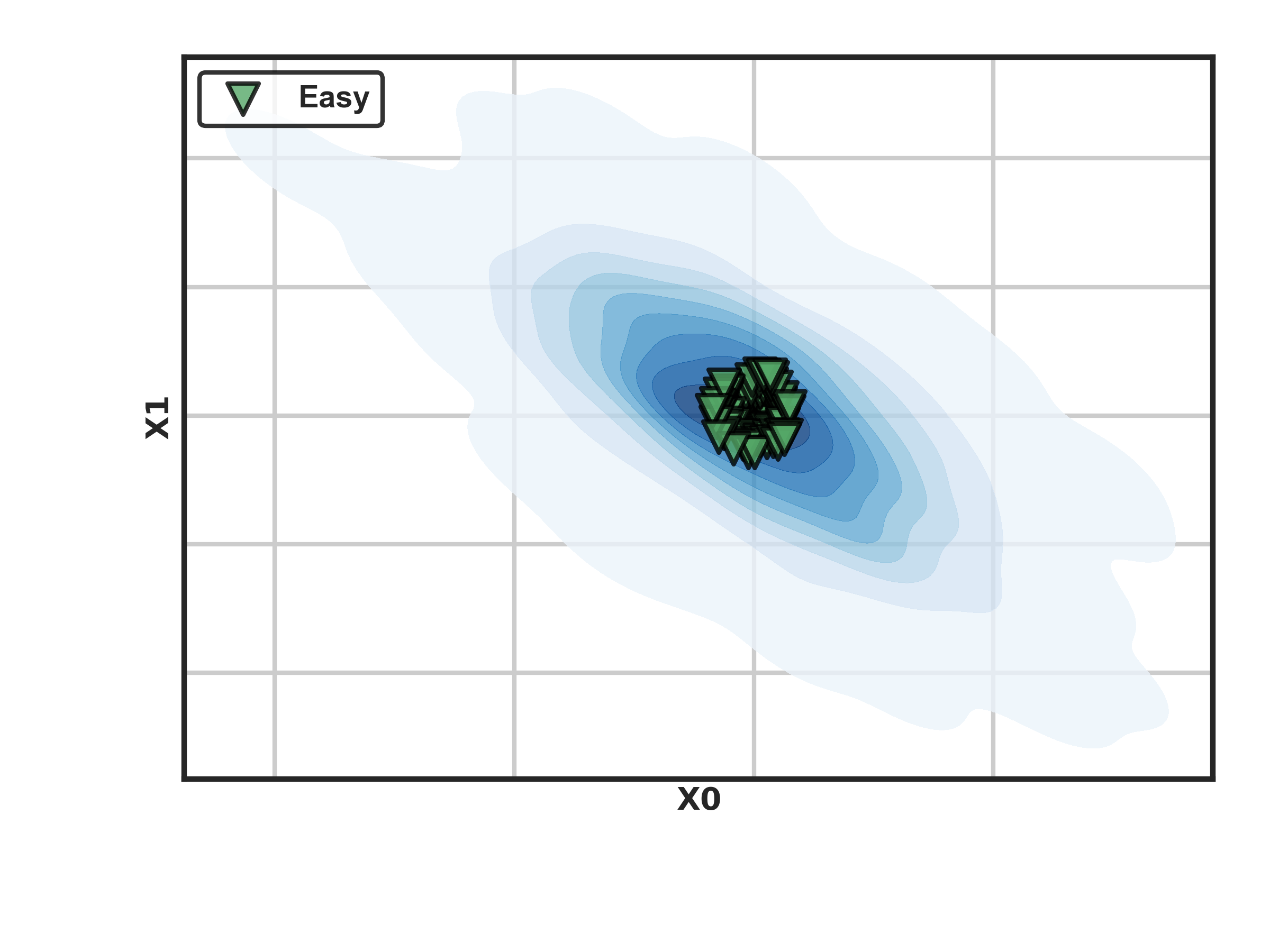}}
\subfloat[\textsc{Easy ($\psi=0.2$)}]
{\includegraphics[width=0.32\textwidth]{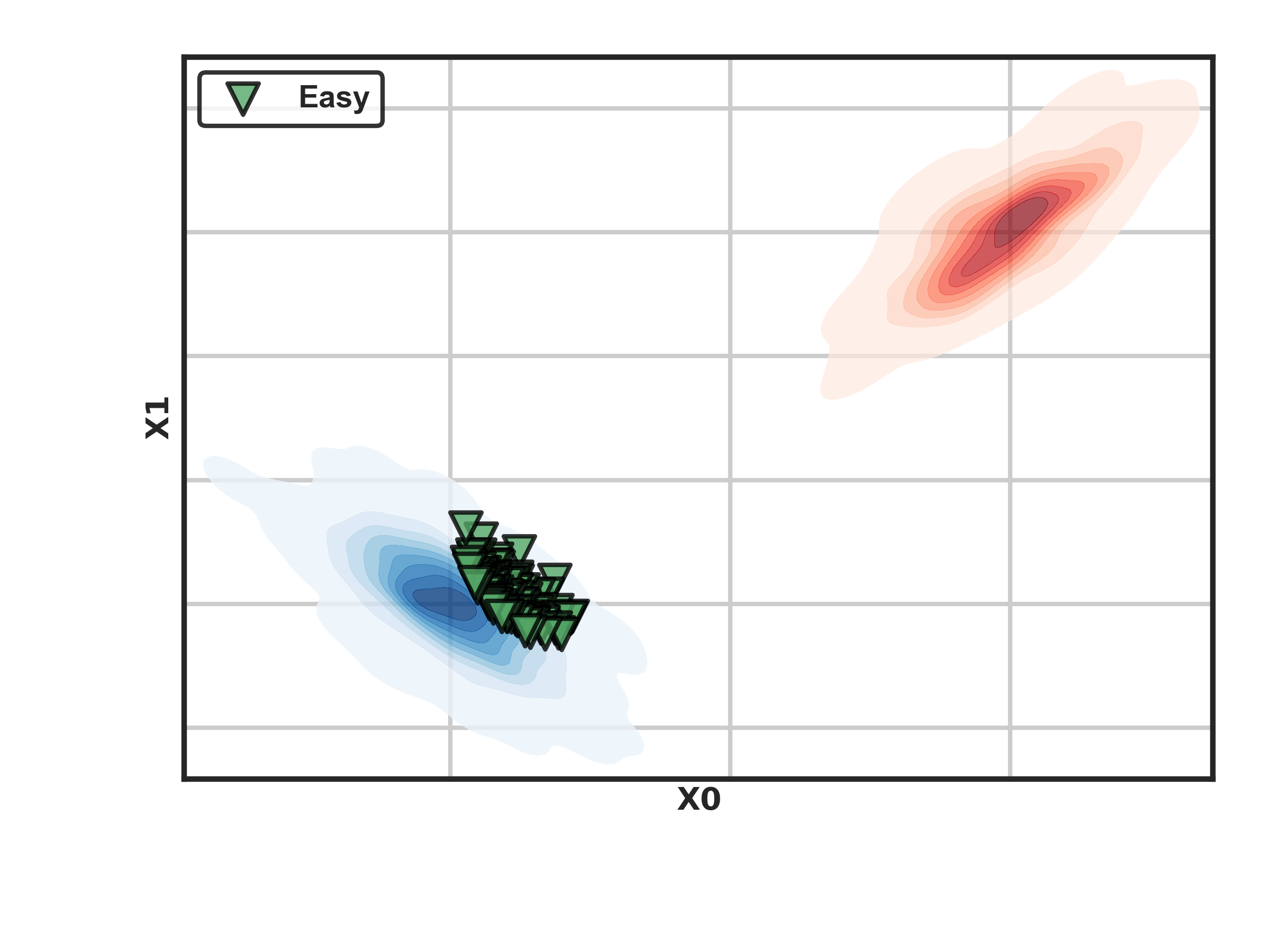}}
\subfloat[\textsc{Easy ($\psi=0.45$)}]
{\includegraphics[width=0.32\textwidth]{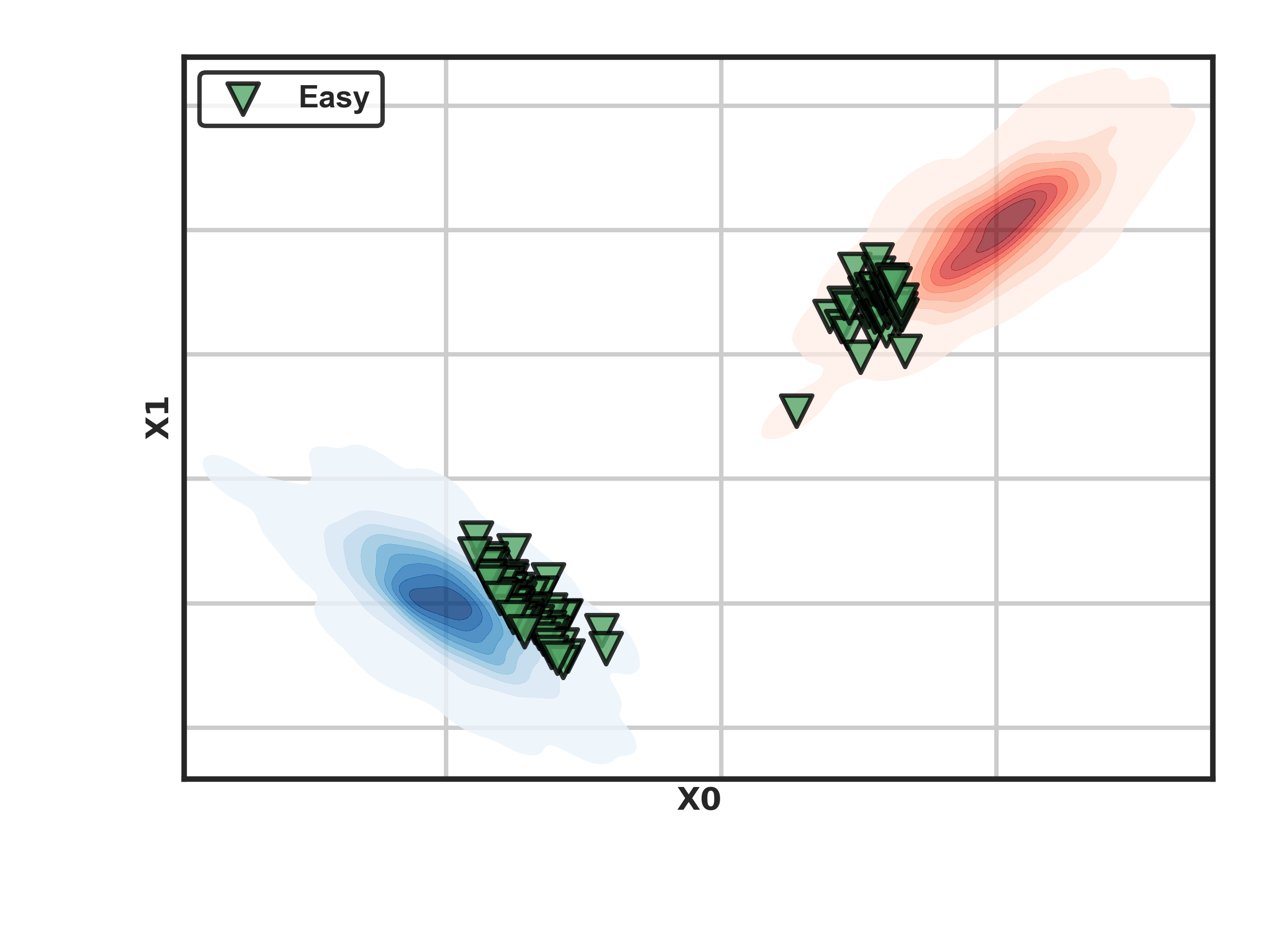}}
\\
\subfloat[\textsc{Hard ($\psi=0$)}]
{\includegraphics[width=0.32\textwidth]{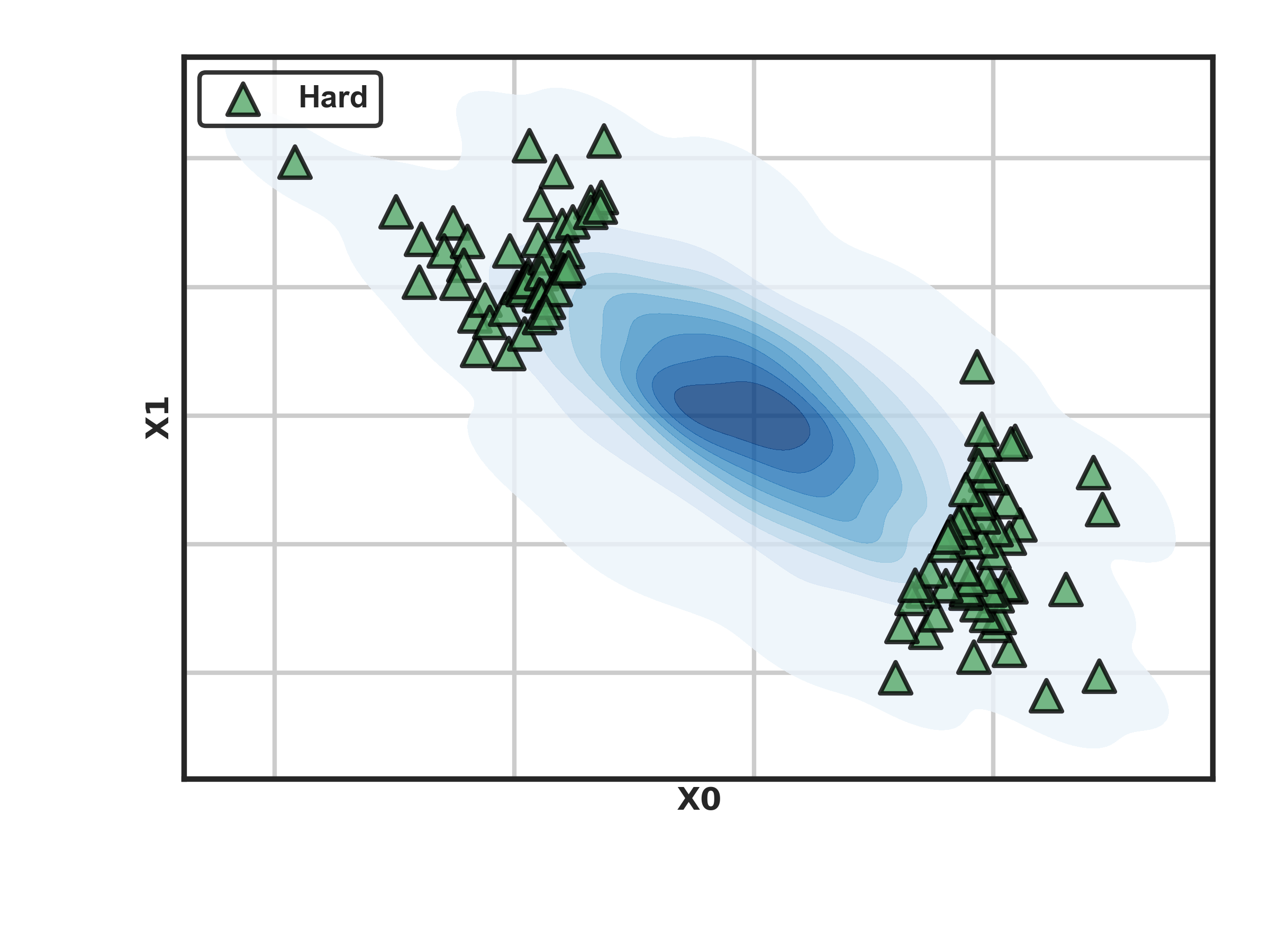}}
\subfloat[\textsc{Hard ($\psi=0.2$)}]
{\includegraphics[width=0.32\textwidth]{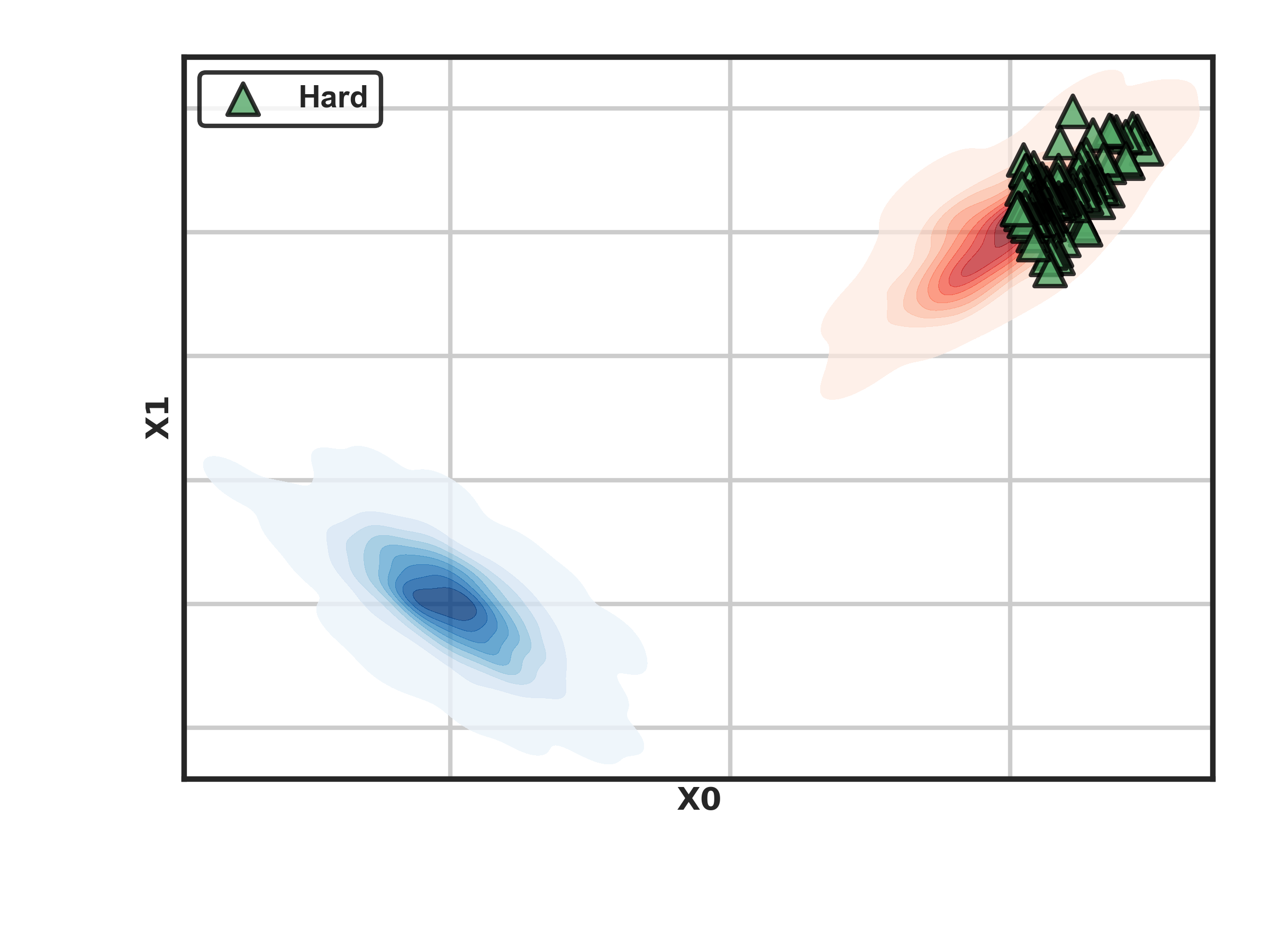}}
\subfloat[\textsc{Hard ($\psi=0.45$)}]
{\includegraphics[width=0.32\textwidth]{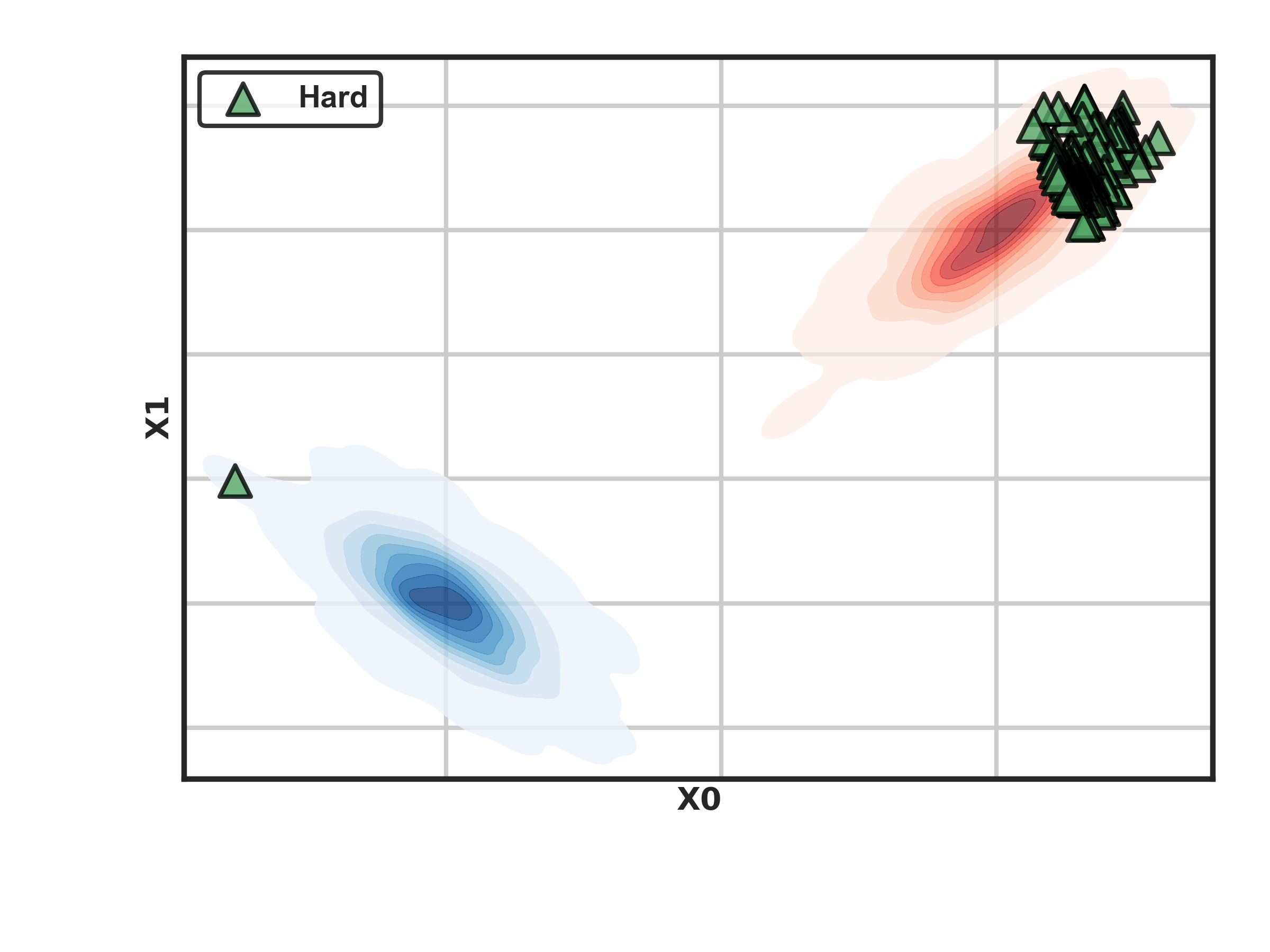}}
\\
\subfloat[\textsc{Gm Matching ($\psi=0$)}]
{\includegraphics[width=0.32\textwidth]{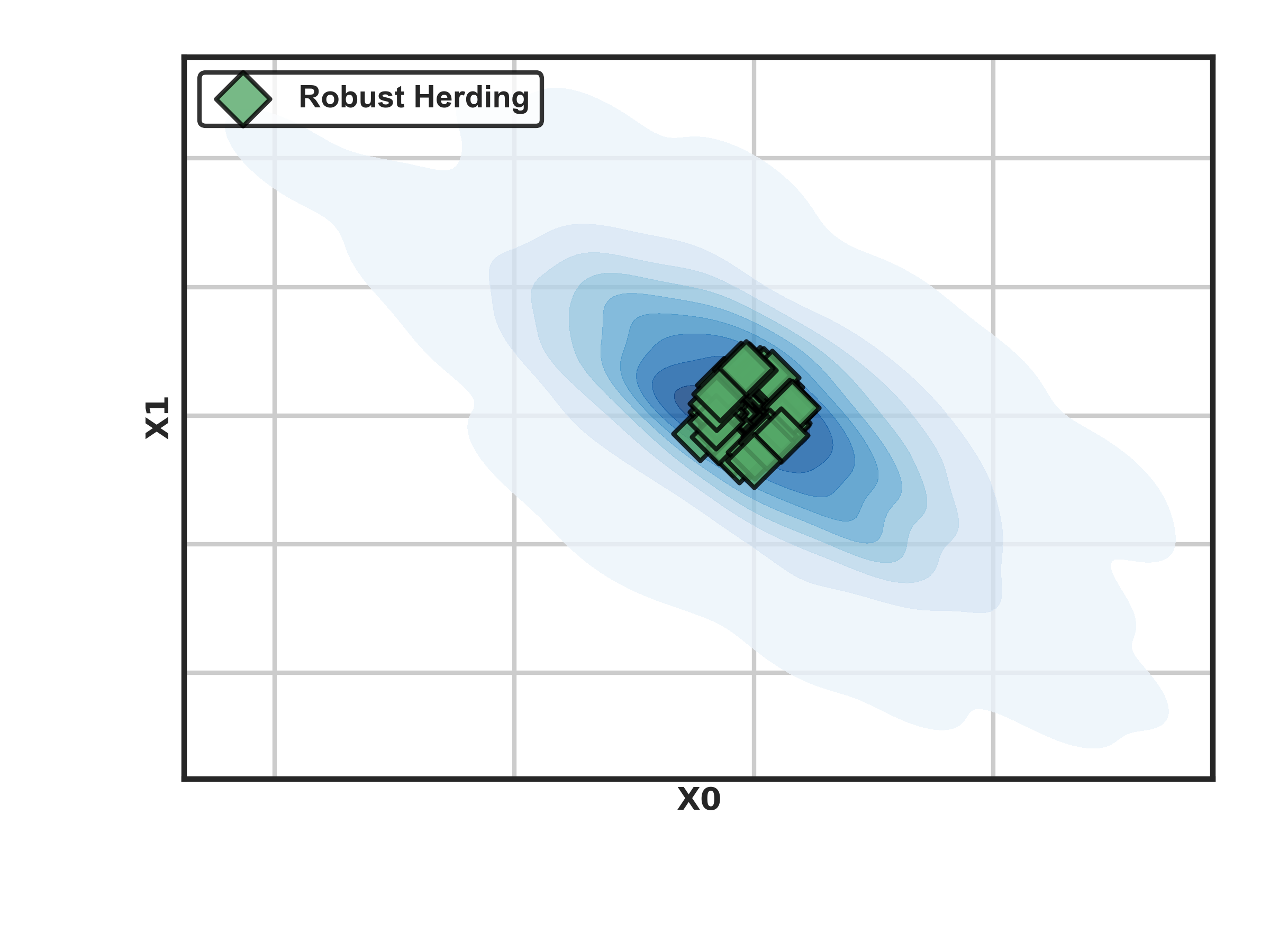}}
\subfloat[\textsc{Gm Matching ($\psi=0.2$)}]
{\includegraphics[width=0.32\textwidth]{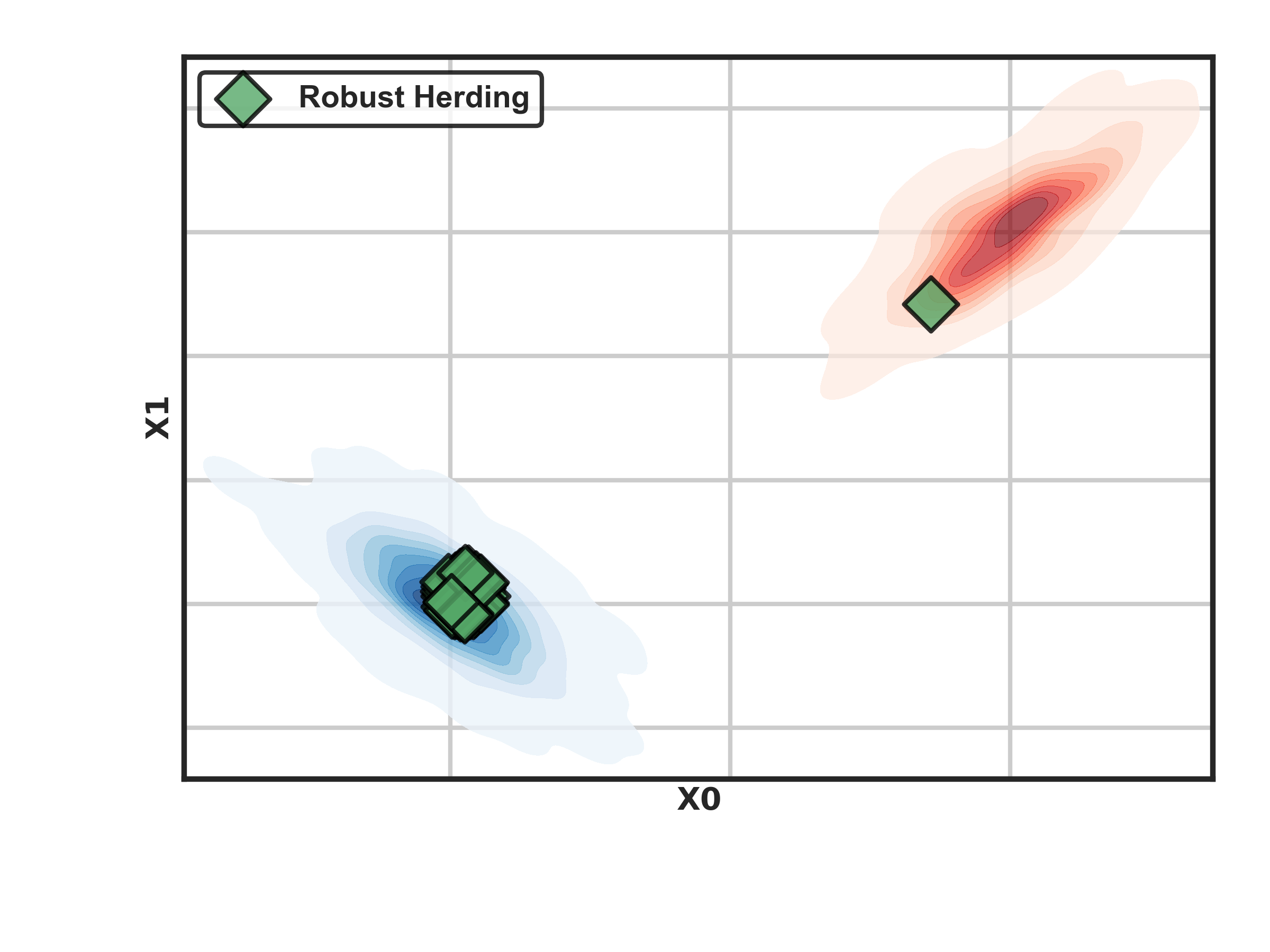}}
\subfloat[\textsc{Gm Matching ($\psi=0.45$)}]
{\includegraphics[width=0.32\textwidth]{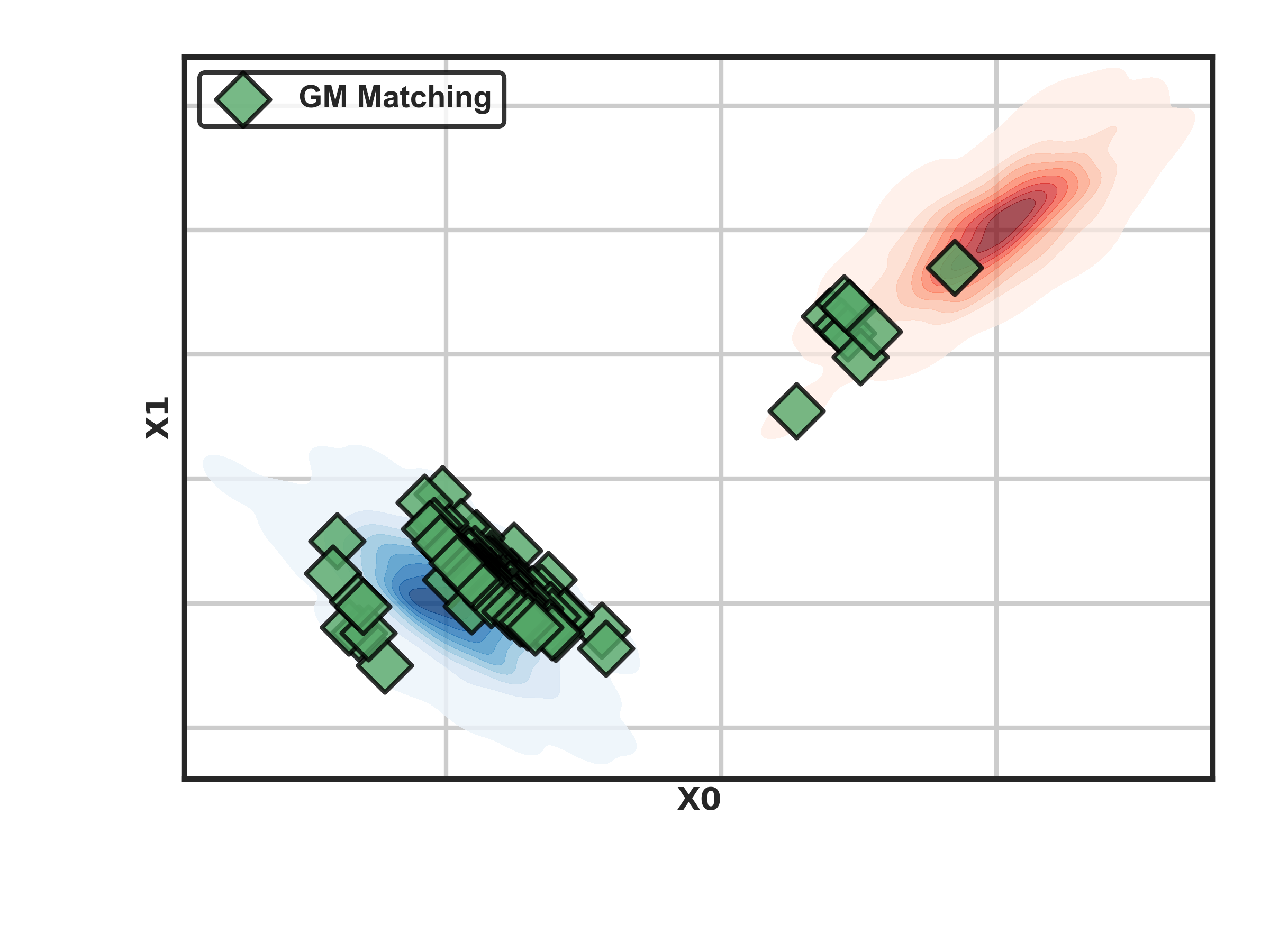}}
\\
\caption{\footnotesize {\bf Toy Example:} {\bf 0/20/45\% of the samples are corrupted} i.e. drawn from an adversary chosen distribution (red). We compare several baselines for choosing 10\% samples: (\textsc{Uniform}) random sampling, (\textsc{Easy}) selects of samples closest to the centroid. (\textsc{Hard}) Selection of samples farthest from the centroid. (\textsc{Moderate}) selects samples closest to the median distance from the centroid. (\textsc{Herding}) moment matching, (\textsc{GM Matching}) robust moment (GM) matching. Clearly $\gm$ Matching is significantly more robust and diverse than the other approaches even at such high corruption rates.}

\label{fig:toy-trade-off}
\end{figure*}

{\bf Robustness vs Diversity :}   In the {\bf ideal scenario} (i.e. in absence of any corruption), hard examples are known to contribute the most in downstream generalization performance~\citep{katharopoulos2018not, joshi2009multi, huang2010active, balcan2007margin} as they often capture most of the usable information in the dataset~\citep{xu2020theory}. On the other hand, in {\bf realistic noisy scenarios} involving outliers, this strategy often fails since the noisy examples are  wrongly deemed informative for training~\citep{zhang2018generalized, park2024robust}. Pruning methods specifically designed for such noisy scenarios thus propose to retain the most representative (easy) samples~\citep{pleiss2020identifying, jiang2018mentornet, har2006maximum, shah2020choosing, shen2019learning}. However, by only choosing samples far from the decision boundary, these methods ignore the more informative uncorrupted less prototypical samples. This can often result in sub-optimal downstream performance and in fact can also lead to degenerate solutions due to a covariance-shift problem~\citep{sugiyama2012machine}; giving rise to a {\em robustness vs diversity trade off}~\citep{xia2022moderate}.
This issue restricts the applicability of these methods, as realistic scenarios often deviate from expected conditions, making it challenging or impractical to adjust the criteria and methods accordingly. To go beyond these limitations, in this work, we consider the setting where a fraction of the samples can be {\bf arbitrarily corrupted}. 
\begin{definition}[{\bf $\alpha$-corrupted generation process}]
\label{def:corruption_model}
    Given $\psi \in [0, \frac{1}{2})$ and a set of observations from the original distribution of interest, an adversary is allowed to \textbf{inspect} all the samples and \textbf{arbitrarily} perturb up to $\psi$ fraction of them. We refer to a set of samples $\gD = \gD_{\gG} \cup \gD_{\gB}$ generated through this process as $\alpha$-corrupted where, $\alpha:=|\gD_{\gB}|/ |\gD_{\gG}|  = \frac{\psi}{1 - \psi}< 1$. $\gD_{\gB}$ and $\gD_{\gG}$ denote the sets of corrupt and clean samples.
\end{definition}
Given such an $\alpha$-corrupted~\footnote{
This strong corruption model is also referred to as the Gross Contamination Framework~\citep{diakonikolas2019recent}, that generalizes the popular Huber Contamination Model~\citep{huber1992robust} and Byzantine Corruption Framework~\citep{lamport1982byzantine}.} set of observations $\gD = \gD_{\gG} \cup \gD_{\gB} = \{(\vx_i, y_i) \in \sR^d \times \gY\}_{i=1}^N$, the goal of {\bf robust data pruning} is thus to judiciously select a subset $\gD_\gS \subseteq \gD$; that {\em encapsulates the comprehensive statistical characteristics of the underlying clean (uncorrupted) distribution induced by subset $\gD_{\gG}$ without any a-priori knowledge about the corrupted samples}. Note that, allowing the noisy samples to be {\bf arbitrarily corrupted} enables us to generalize many important robustness scenarios; including \textbf{corrupt features, label noise} and {\bf adversarial attacks}.

In response, we develop a selection strategy that aims to find subset such that the discrepancy between the mean of subset and the geometric median ($\gm$)(\cref{def:geo_med}) of the (potentially noisy) dataset is minimized. We call this algorithm Geometric Median Matching and describe it in~\cref{sec:gm_match}. We show that $\gm$ Matching is guaranteed to converge to a neighborhood of the underlying true (uncorrupted) mean of the (corrupted) dataset even when $1/2$ fraction of the samples are arbitrarily corrupted. 

Theoretically, we show that, $\gm$ Matching converges to a neighborhood of original underlying mean, at an impressive $\gO(1/k)$ rate while being robust.

\begin{table*}
    \footnotesize
    \centering
    \begin{tabular}{lccccc}
        \toprule
        & \multicolumn{2}{c}{\bf ResNet-50→ VGG-16} 
        & \multicolumn{2}{c}{\bf ResNet-50→ ShuffleNet} 
        & \\
        \cmidrule(r){2-3} 
        \cmidrule(r){4-5} 
        {\bf Method / Ratio } 
        & 20\% 
        & 30\% 
        & 20\% 
        & 30\% 
        & {\bf Mean $\uparrow$} 
        \\
        \midrule
        \multicolumn{6}{c}{\bf No Corruption} 
        \\
        \midrule 
        Random 
        & 29.63$\pm$0.43 
        & 35.38$\pm$0.83 
        & 32.40$\pm$1.06 
        & 39.13$\pm$0.81 
        & 34.96
        \\
        Herding 
        & 31.05$\pm$0.22 
        & 36.27$\pm$0.57 
        & 33.10$\pm$0.39 
        & 38.65$\pm$0.22 
        & 35.06
        \\
        Forgetting 
        & 27.53$\pm$0.36 
        & 35.61$\pm$0.39 
        & 27.82$\pm$0.56 
        & 36.26$\pm$0.51 
        & 32.35
        \\
        GraNd-score 
        & 29.93$\pm$0.95 
        & 35.61$\pm$0.39 
        & 29.56$\pm$0.46 
        & 37.40$\pm$0.38 
        & 33.34
        \\
        EL2N-score 
        & 26.47$\pm$0.31 
        & 33.19$\pm$0.51 
        & 28.18$\pm$0.27 
        & 35.81$\pm$0.29 
        & 31.13
        \\
        Optimization-based 
        & 25.92$\pm$0.64 
        & 34.82$\pm$1.29 
        & 31.37$\pm$1.14 
        & 38.22$\pm$0.78 
        & 32.55
        \\
        Self-sup.-selection 
        & 25.16$\pm$1.10 
        & 33.30$\pm$0.94 
        & 29.47$\pm$0.56 
        & 36.68$\pm$0.36 
        & 31.45
        \\
        Moderate-DS 
        & 31.45$\pm$0.32 
        & 37.89$\pm$0.36 
        & 33.32$\pm$0.41 
        & 39.68$\pm$0.34
        & 35.62
        \\
        {\bf $\gm$ Matching}
        & {\bf 35.86$\pm$0.41}
        & {\bf 40.56$\pm$0.22}
        & {\bf 35.51$\pm$0.32}
        & {\bf 40.30$\pm$0.58}
        & {\bf 38.47}
        \\
        \midrule
        \multicolumn{6}{c}{\bf 20\% Label Corruption} 
        \\
        \midrule 
        Random 
        & 23.29$\pm$1.12 
        & 28.18$\pm$1.84 
        & 25.08$\pm$1.32 
        & 31.44$\pm$1.21 
        & 27.00
        \\
        Herding 
        & 23.99$\pm$0.36
        & 28.57$\pm$0.40
        & 26.25$\pm$0.47 
        & 30.73$\pm$0.28
        & 27.39
        \\
        Forgetting 
        & 14.52$\pm$0.66 
        & 21.75$\pm$0.23 
        & 15.70$\pm$0.29 
        & 22.31$\pm$0.35 
        & 18.57
        \\
        GraNd-score 
        & 22.44$\pm$0.46 
        & 27.95$\pm$0.29 
        & 23.64$\pm$0.10 
        & 30.85$\pm$0.21 
        & 26.22
        \\
        EL2N-score 
        & 15.15$\pm$1.25 
        & 23.36$\pm$0.30 
        & 18.01$\pm$0.44 
        & 24.68$\pm$0.34 
        & 20.30
        \\
        Optimization-based 
        & 22.93$\pm$0.58 
        & 24.92$\pm$2.50 
        & 25.82$\pm$1.70 
        & 30.19$\pm$0.48 
        & 25.97
        \\
        Self-sup.-selection 
        & 18.39$\pm$1.30 
        & 25.77$\pm$0.87 
        & 22.87$\pm$0.54 
        & 29.80$\pm$0.36 
        & 24.21
        \\
        Moderate-DS 
        & 23.68$\pm$0.19 
        & 28.93$\pm$0.19 
        & 28.82$\pm$0.33 
        & 32.39$\pm$0.21 
        & 28.46
        \\
        {\bf $\gm$ Matching}
        & {\bf 28.77$\pm$0.77}
        & {\bf 34.87$\pm$0.23}
        & {\bf 32.05$\pm$0.93}
        & {\bf 37.43$\pm$0.25}
        & {\bf 33.28}
        \\
        \midrule
        \multicolumn{6}{c}{\bf 20\% Feature Corruption} 
        \\
        \midrule
        Random 
        & 26.33$\pm$0.88 
        & 31.57$\pm$1.31 
        & 29.15$\pm$0.83 
        & 34.72$\pm$1.00 
        & 30.44
        \\
        Herding 
        & 18.03$\pm$0.33 
        & 25.77$\pm$0.34 
        & 23.33$\pm$0.43 
        & 31.73$\pm$0.38 
        & 24.72
        \\
        Forgetting 
        & 19.41$\pm$0.57 
        & 28.35$\pm$0.16 
        & 18.44$\pm$0.57 
        & 31.09$\pm$0.61 
        & 24.32
        \\
        GraNd-score 
        & 23.59$\pm$0.19 
        & 30.69$\pm$0.13 
        & 23.15$\pm$0.56 
        & 31.58$\pm$0.95 
        & 27.25
        \\
        EL2N-score 
        & 24.60$\pm$0.81 
        & 31.49$\pm$0.33 
        & 26.62$\pm$0.34 
        & 33.91$\pm$0.56 
        & 29.16
        \\
        Optimization-based 
        & 25.12$\pm$0.34 
        & 30.52$\pm$0.89 
        & 28.87$\pm$1.25 
        & 34.08$\pm$1.92 
        & 29.65
        \\
        Self-sup.-selection 
        & 26.33$\pm$0.21 
        & 33.23$\pm$0.26 
        & 26.48$\pm$0.37 
        & 33.54$\pm$0.46 
        & 29.90
        \\
        Moderate-DS 
        & 29.65$\pm$0.68 
        & 35.89$\pm$0.53 
        & 32.30$\pm$0.38
        & 38.66$\pm$0.29 
        & 34.13
        \\
        $\gm$ Matching
        & {\bf 33.45$\pm$1.02}
        & {\bf 39.46$\pm$0.44}
        & {\bf 35.14$\pm$0.21}
        & {\bf 39.89$\pm$0.98}
        & {\bf 36.99}
        \\
        \midrule
        \multicolumn{6}{c}{\bf PGD Attack} 
        \\
        \midrule
        Random 
        & 26.12$\pm$1.09 
        & 31.98$\pm$0.78 
        & 28.28$\pm$0.90 
        & 34.59$\pm$1.18 
        & 30.24
        \\
        Herding 
        & 26.76$\pm$0.59 
        & 32.56$\pm$0.35 
        & 28.87$\pm$0.48 
        & 35.43$\pm$0.22 
        & 30.91
        \\
        Forgetting 
        & 24.55$\pm$0.57 
        & 31.83$\pm$0.36 
        & 23.32$\pm$0.37 
        & 31.82$\pm$0.15 
        & 27.88
        \\
        GraNd-score 
        & 25.19$\pm$0.33 
        & 31.46$\pm$0.54 
        & 26.03$\pm$0.66 
        & 33.22$\pm$0.24 
        & 28.98
        \\
        EL2N-score 
        & 21.73$\pm$0.47 
        & 27.66$\pm$0.32 
        & 22.66$\pm$0.35 
        & 29.89$\pm$0.64 
        & 25.49
        \\
        Optimization-based 
        & 26.02$\pm$0.36 
        & 31.64$\pm$1.75 
        & 27.93$\pm$0.47 
        & 34.82$\pm$0.96 
        & 30.10
        \\
        Self-sup.-selection 
        & 22.36$\pm$0.30 
        & 28.56$\pm$0.50 
        & 25.35$\pm$0.27 
        & 32.57$\pm$0.13 
        & 27.21
        \\
        Moderate-DS 
        & 27.24$\pm$0.36 
        & 32.90$\pm$0.31 
        & 29.06$\pm$0.28 
        & 35.89$\pm$0.53 
        & 31.27
        \\
        {\bf $\gm$ Matching}
        & {\bf 27.96$\pm$1.60}
        & {\bf 35.76$\pm$0.82}
        & {\bf 34.11$\pm$0.65}
        & {\bf 40.91$\pm$0.84}
        & {\bf 34.69}
        \\
        \bottomrule
    \end{tabular}
    \caption{{\bf Tiny ImageNet} : A ResNet-50 proxy (pretrained on TinyImageNet) is used to find important samples from Tiny-ImageNet; which is then used to train a VGGNet-16 and ShuffleNet. We repeat the experiment across multiple corruption settings - clean; 20\% Feature / Label Corruption and PGD attack when 20\% and 30\% samples are selected.}
    \label{tab:VGG-Shuffle}
\end{table*}

\section{Geometric Median (\textsc{GM}) Matching}
\label{sec:gm_match}
We make a {\bf key} observation that in presence of arbitrary corruption, the class center itself can be arbitrarily shifted and in fact need not even lie in the convex hull of the underlying clean samples. That is to say, in the arbitrary corruption scenario, the entire notion of easy (robust) / hard based on heuristic falls apart.
% Under the gross contamination framework~(\cref{def:corruption_model}), the vulnerability of pruning algorithms can be attributed to the centroid estimation. Motivated by this {\bf key observation} in this paper we propose $\gmm$ -- a data pruning strategy that remains provably effective even when up to $1/2$ fraction of the samples are arbitrary corrupted without any distributional assumption on the data. 
\begin{algorithm}
\SetAlgoLined
{\bf Initialize :} A finite collection of $\alpha$ corrupted (\cref{def:corruption_model}) observations $\gD$ defined over Hilbert space $\gH \in \sR^d$, equipped with norm $\|\cdot\|$ and inner $\langle\cdot\rangle$ operators; initial weight vector $\vtheta_0 \in \gH$. 
\\
{\bf Robust Mean Estimation:} $\Mu^\gm =\argmin_{\vz\in \gH}\sum_{\vx_i \in \gD}\|\vz - \vx_i\|$
\\
$\gD_\gS \leftarrow \emptyset$
\\
\For{iterations  t = 0, 1, \dots , k-1}
{
    $\vx_{t+1} := \argmax_{\vx \in \gD} \; \langle\vtheta_t,\vx\rangle$
    \\
    $\vtheta_{t+1} := \vtheta_t + \Mu^\gm_\epsilon - \vx_{t+1}$
    \\
    $\gD_\gS := \gD_\gS \cup \vx_{t+1}$
    \\
    $\gD: = \gD \setminus \vx_{t+1}$
}

{\bf return:} $\gD_\gS$
\label{algo:gmm}
\caption{\bf\textsc{Geometric Median Matching}}
\end{algorithm}
We exploit the breakdown and translation invariance property of Geometric Median ($\gm$) (~\cref{def:geo_med}) -- a well studied spatial estimator, known for several nice properties like {\bf rotation and translation invariance} and \textbf{optimal breakdown point of 1/2 under gross corruption}~\citep{minsker2015geometric, kemperman1987median}. to perform subset selection while being resilient to arbitrary corruption.

\begin{definition}[{\textbf{Geometric Median}}]
\label{def:geo_med}
    Given a finite collection of observations $\{\vx_1, \vx_2, \dots \vx_n\}$ defined over Hilbert space $\gH \in \sR^d$, equipped with norm $\|\cdot\|$ and inner $\langle\cdot\rangle$ operators,
    the geometric median(or Fermat-Weber point)~\citep{weber1929alfred} is defined as:
    \begin{equation}
    \label{eq:gm}
        \vx_* =\argmin_{\vz\in \gH}\bigg[ \rho(\vz):= \sum_{i=1}^{n}\bigg\|\vz - \vx_i\bigg\|
        \bigg] 
    \end{equation}
\end{definition}

Computing the $\gm$ exactly is known to be hard and no linear time algorithm exists~\citep{bajaj1988algebraic}. Consequently, it is necessary to rely on approximation methods to estimate the geometric median~\citep{weiszfeld1937point, vardi2000multivariate, cohen2016geometric}. We call a point $\vx_\epsilon \in \gH$ an {\bf $\epsilon$-accurate geometric median} if:
\begin{equation}
    \sum_{i=1}^{n}\bigg\|\vx_\epsilon - \vx_i\bigg\| \leq (1 + \epsilon)\sum_{i=1}^{n}\bigg\|\vx_\ast - \vx_i\bigg\|
\label{eq:gm_epsilon}
\end{equation}

Given a random batch of samples $\gD$ from an $\alpha$ corrupted dataset (~\cref{def:corruption_model}), access to an $\epsilon$ accurate $\gm(\cdot)$ oracle; we aim to find a $k$-subset of samples such that the mean of the selected subset approximately matches the geometric median of the training dataset, i.e. we aim solve:
\begin{equation}
    \argmin_{\gD_\gS \subseteq \gD, |\gD_\gS| = k}
    \bigg\| \Mu^\gm_\epsilon - \frac{1}{k} \sum_{\vx_i \in \gS}\vx_i\bigg\|^2
    \label{eq:gm_matching}
\end{equation}
where, $\Mu^\gm_\epsilon$ denotes the $\epsilon$-approximate $\gm$~\eqref{eq:gm_epsilon} of training dataset $\gD$. Intuitively, if such a subset can be found, the expected cost (e.g. loss / average gradient) over $\gS$ is guaranteed to be close (as characterized in~\cref{th:gm_match_cvg}) to the expected cost over the uncorrupted examples $\gD_\gG$; enabling robust data pruning. This is because, even in presence of grossly corrupted samples (unbounded), $\gm$ remains bounded w.r.t the underlying true mean~\citep{pmlr-v151-acharya22a, minsker2015geometric, cohen2016geometric, wu2020federated, chen2017distributed}.

Noting that, the proposed {\bf robust moment matching objective}~\eqref{eq:gm_matching} is an instance of the famous set function maximization problem -- known to be NP hard via a reduction from $k$-set cover~\citep{feige1998threshold, nemhauser1978analysis}; we adopt the iterative, herding~\citep{welling2009herding, chen2010super} style greedy approach to approximately solve~\eqref{eq:gm_matching}: 

\fcolorbox{black}{white}
{
    \begin{minipage}{0.95\linewidth}
        We start with a suitably chosen $\vtheta_0 \in \sR^d$; and repeatedly perform the following updates, adding one sample at a time, $k$ times:
        \begin{flalign}
            \vx_{t+1} &:= \argmax_{\vx \in \gD} \; \langle\vtheta_t, \vx\rangle
            \\
            \vtheta_{t+1} &:= \vtheta_t + \bigg( \Mu^\gm_\epsilon - \vx_{t+1} \bigg)
        \end{flalign}
    \end{minipage}
}

Clearly, $\gmm$ is performing greedy minimization of the error~\eqref{eq:gm_matching}. 
Intuitively, at each iteration, it accumulates the discrepancies between the $\gm$ of dataset and empirical mean of the chosen samples. By pointing towards the direction that reduces the accumulated error, $\mathbf{\theta}_t$ guides the algorithm to explore underrepresented regions of the feature space, thus {\bf promoting diversity}. Moreover, by matching {\bf $\gm$ (robust moment)}, the algorithm ensures that far-away points (outliers) suffer larger penalty -- consequently, encouraging $\gmm$ to choose more samples near the convex hull of uncorrupted points $\text{conv}\{\phi_\gB(\vx) | \vx \in \gD_\gG\}$.
Theoretically, we can establish the following convergence guarantee for $\gmm$ :
\begin{theorem}
    Suppose that, we are given, a set of $\alpha$-corrupted samples $\gD$ (\cref{def:corruption_model}), pretrained proxy model $\phi_\mB$, and an $\epsilon$ approx. $\gm(\cdot)$ oracle~\eqref{eq:gm}. Then, $\gmm$ guarantees that the mean of the selected $k$-subset $\bm{\mu}^\gS = \frac{1}{k} \sum_{\vx_i \in \gD_\gS}\vx_i$ converges to the neighborhood of $\bm{\mu}^\gG = \E_{\vx \in \gD_\gG}(\vx)$ at the rate $\gO(\frac{1}{k})$ such that:   
    \begin{flalign}
        \label{equation:bound-z0-simplified}
        \nonumber
        \E\bigg\|\bm{\mu}^\gS - \bm{\mu}^\gG\bigg \|^2\leq 
        &\frac{8|\gD_\gG|}{(|\gD_\gG|-|\gD_\gB|)^2}\sum_{\vx \in \gD_\gG}\E\bigg\|\vx -\Mu^\gG \bigg\|^2 
        \\
        &+\frac{2\epsilon^2}{(|\gD_\gG|-|\gD_\gB|)^2}
    \end{flalign}
\label{th:gm_match_cvg}
\end{theorem}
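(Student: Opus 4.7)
The plan is to decompose $\E\|\bm{\mu}^\gS - \bm{\mu}^\gG\|^2$ via the triangle inequality into an algorithmic (herding) error and a statistical (robust mean estimation) error,
\[
\|\bm{\mu}^\gS - \bm{\mu}^\gG\|^2 \leq 2\|\bm{\mu}^\gS - \Mu^\gm_\epsilon\|^2 + 2\|\Mu^\gm_\epsilon - \bm{\mu}^\gG\|^2,
\]
and bound each piece separately. The first term measures how well the iterative moment-matching procedure tracks its target $\Mu^\gm_\epsilon$; the second term measures how well that target (the approximate $\gm$ of the corrupted data) approximates the clean mean $\bm{\mu}^\gG$.

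For the herding term, I would unroll the update. A short telescoping induction on $\vtheta_{t+1} = \vtheta_t + \Mu^\gm_\epsilon - \vx_{t+1}$ gives $\vtheta_k = \vtheta_0 + k\,\Mu^\gm_\epsilon - \sum_{t=1}^{k}\vx_t$, so that
\[
\|\bm{\mu}^\gS - \Mu^\gm_\epsilon\| \;=\; \frac{\|\vtheta_k - \vtheta_0\|}{k}.
\]
Because each greedy step picks $\vx_{t+1} \in \arg\max_{\vx \in \gD}\langle \vtheta_t,\vx\rangle$ and $\Mu^\gm_\epsilon$ lies in (or very near) the convex hull of the remaining candidates, a standard Welling-Chen style potential argument shows $\|\vtheta_t\|$ stays bounded by a constant multiple of the feature diameter, yielding the $\gO(1/k^2)$ squared-error rate which is the source of the advertised $\gO(1/k)$ convergence.

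For the robustness term, I would split further as
\[
\|\Mu^\gm_\epsilon - \bm{\mu}^\gG\|^2 \leq 2\|\Mu^\gm_\epsilon - \Mu^\gm\|^2 + 2\|\Mu^\gm - \bm{\mu}^\gG\|^2,
\]
where $\Mu^\gm$ is the exact geometric median of $\gD$. Converting the objective-value $\epsilon$-approximation in~\eqref{eq:gm_epsilon} into a point-distance bound via the local strong convexity of $\rho(\cdot)$ around its minimizer handles the first summand and produces the $2\epsilon^2/(|\gD_\gG|-|\gD_\gB|)^2$ term. For the second summand I would invoke the Minsker/Kemperman-style robustness of the geometric median: using the first-order optimality condition $\sum_{\vx \in \gD}(\Mu^\gm - \vx)/\|\Mu^\gm - \vx\| = \vzero$, splitting the sum into contributions from $\gD_\gG$ and $\gD_\gB$, and bounding the corrupted contribution in norm by $|\gD_\gB|$ (each unit vector has norm at most $1$), one obtains
\[
\bigg\|\sum_{\vx \in \gD_\gG} \frac{\Mu^\gm - \vx}{\|\Mu^\gm - \vx\|}\bigg\| \leq |\gD_\gB|.
\]
Rearranging this, pulling out $\|\Mu^\gm - \bm{\mu}^\gG\|$ from the clean residuals, and applying Cauchy-Schwarz (or Jensen) over $\gD_\gG$ yields a bound of the form $\|\Mu^\gm - \bm{\mu}^\gG\|^2 \leq \tfrac{C |\gD_\gG|}{(|\gD_\gG|-|\gD_\gB|)^2}\sum_{\vx \in \gD_\gG}\|\vx - \bm{\mu}^\gG\|^2$. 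Taking expectations and assembling the three pieces gives the claimed inequality.

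The main obstacle will be this last robustness step: carefully chasing the constants so that the coefficient collapses to exactly $8|\gD_\gG|/(|\gD_\gG|-|\gD_\gB|)^2$, and confirming that the denominator $(|\gD_\gG|-|\gD_\gB|)^2$ — which cleanly encodes the blow-up as the corruption fraction approaches the optimal $1/2$ breakdown — really does emerge from the optimality-condition rearrangement rather than from a looser sub-Gaussian concentration argument. The herding bound and the conversion of the $\epsilon$-approximation condition are largely routine once the target $\Mu^\gm_\epsilon$ is fixed and local strong convexity of $\rho$ is established.
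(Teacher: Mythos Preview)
Your high-level decomposition (herding error plus robustness error) and your treatment of the herding term via a Chen--Welling potential argument match the paper exactly: the paper also unrolls $\vtheta_t$, uses that $\Mu^\gm_\epsilon$ lies in the interior of $\text{conv}(\gD)$ to get a uniform bound $\|\vtheta_t\| \leq r/\kappa$, and reads off the $\gO(1/k)$ rate.

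For the robustness term, however, the paper takes a more direct route than you propose. It does \emph{not} pass through the exact geometric median $\Mu^\gm$ at all, and so it never needs local strong convexity of $\rho$ or the first-order optimality condition. Instead it bounds $\|\Mu^\gm_\epsilon - \bm{\mu}^\gG\|$ in one shot: apply the reverse triangle inequality $\|\Mu^\gm_\epsilon - \vx_i\| \geq \pm(\|\Mu^\gm_\epsilon\| - \|\vx_i\|)$ with opposite signs on $\gD_\gG$ and $\gD_\gB$ to lower-bound $\rho(\Mu^\gm_\epsilon)$, upper-bound the same quantity by $\rho(\vzero) + \epsilon$ via the $\epsilon$-oracle, and combine to obtain
\[
(|\gD_\gG| - |\gD_\gB|)\,\|\Mu^\gm_\epsilon\| \;\leq\; 2\sum_{\vx \in \gD_\gG}\|\vx\| + \epsilon.
\]
Squaring (with $(a+b)^2 \leq 2a^2 + 2b^2$ and Cauchy--Schwarz on the sum) produces both the $8|\gD_\gG|$ coefficient and the $2\epsilon^2$ term, already sharing the denominator $(|\gD_\gG| - |\gD_\gB|)^2$. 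The final step is simply translation equivariance of $\gm$, recentering the whole argument at $\bm{\mu}^\gG$ instead of the origin.

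Your route has a genuine obstacle precisely at the step you flagged. The objective $\rho(\vz) = \sum_i \|\vz - \vx_i\|$ is a sum of norms and is \emph{not} strongly convex in general (its Hessian is only PSD and degenerates along directions where the residuals align), so converting the objective-value $\epsilon$-approximation of~\eqref{eq:gm_epsilon} into a point-distance bound $\|\Mu^\gm_\epsilon - \Mu^\gm\|$ would at best introduce a data-dependent curvature constant rather than the clean $(|\gD_\gG|-|\gD_\gB|)^{-2}$ factor. The paper sidesteps this entirely because the $\epsilon$-slack and the clean-residual sum appear in the \emph{same} linear inequality before squaring, so they automatically inherit the same denominator. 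Your first-order-optimality argument for $\|\Mu^\gm - \bm{\mu}^\gG\|$ is a legitimate alternative (it is the Minsker/Chen route), but once you adopt the paper's direct comparison-plus-translation trick it becomes unnecessary, and the constant $8$ falls out without any chasing.
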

This result suggest that, even in presence of $\alpha$ corruption, the proposed algorithm $\gm$ Matching converges to a neighborhood of the true mean, where the neighborhood radius depends on two terms -- the first term depend on the variance of the uncorrupted samples and the second term depends on how accurately the $\gm$ is calculated. Consequently, we say that $\gm$ Matching achieves the optimal breakdown point.   

\subsection{Empirical Evidence}
To ensure reproducibility, our experimental setup is identical to~\citep{xia2022moderate}. 
Spanning across three popular image classification datasets - CIFAR10, CIFAR100 and Tiny-ImageNet - and popular deep nets including ResNet-18/50~\citep{he2016deep}, VGG-16~\citep{simonyan2014very}, ShuffleNet~\citep{ma2018shufflenet}, SENet~\citep{hu2018squeeze}, EfficientNet-B0\citep{tan2019efficientnet}; we compare $\gmm$ against several popular data pruning strategies as baselines:  (1) Random; (2) Herding \cite{welling2009herding}; (3) Forgetting \cite{toneva2018empirical}; (4) GraNd-score \cite{paul2021deep}; (5) EL2N-score \cite{paul2021deep}; (6) Optimization-based \cite{yang2022dataset}; (7) Self-sup.-selection \cite{sorscher2022beyond} and (8) Moderate~\citep{xia2022moderate}. We do not run these baselines for be these baselines are borrowed from~\citep{xia2020robust}. 

We consider three corruption scenarios: {\bf (1) Image Corruption} : a popular robustness setting, often encountered when training models on real-world data~\citep{hendrycks2019benchmarking, szegedy2013intriguing}. {(2)\bf Label Noise :} data in the wild always contains noisy annotations~\citep{li2022selective, patrini2017making, xia2020robust}. {\bf (3) Adversarial Attack :} Imperceptible but adversarial noise on natural examples e.g. PGD attack~\citep{madry2017towards} and GS Attacks~\citep{goodfellow2014explaining}. 

Overall, we improve over prior work almost in all settings, the gains are especially more profound in presence of corruption and at aggressive pruning rates. Thus, making GM Matching a strong baseline for future research in robust data pruning. Due to space constraints we defer the experiments and discussion to Appendix while providing a glimpse in~\cref{fig:toy-trade-off},~\cref{tab:VGG-Shuffle}.

\section{Conclusion}
\label{sec:conclusion}
In this work, we formalized and studied the problem of robust data pruning. We show that existing data pruning strategies suffer significant degradation in performance in presence of corruption. Orthogonal to existing works, we propose $\gmm$ where our goal is to find a $k$-subset from the noisy data such that the mean of the subset approximates the $\gm$ of the noisy dataset. We solve this meta problem using a herding style greedy approach. We theoretically justify our approach and empirically show its efficacy by comparing $\gmm$ against several popular benchmarks across multiple datasets. Our results indicate that $\gmm$ consistently outperforms existing pruning strategies in both clean and noisy settings. While in this work we have only explored greedy herding style approach, it is possible to investigate other combinatorial approaches to solve the meta problem. Further, while we only studied the problem under gross corruption framework, it remains open to improve the results by incorporating certain structural assumptions. 

\clearpage
\bibliographystyle{apalike}
\bibliography{
bibs/contrastive, 
bibs/coreset,
bibs/scaling,
bibs/robust_estimation,
bibs/robust_sgd,
bibs/LLM,
bibs/theory_geometric,
bibs/imp_sampling,
bibs/exp,
bibs/subset_selection
}

\begin{thebibliography}{}

\bibitem[Acharya et~al., 2022]{pmlr-v151-acharya22a}
Acharya, A., Hashemi, A., Jain, P., Sanghavi, S., Dhillon, I.~S., and Topcu, U.
  (2022).
\newblock Robust training in high dimensions via block coordinate geometric
  median descent.
\newblock In Camps-Valls, G., Ruiz, F. J.~R., and Valera, I., editors, {\em
  Proceedings of The 25th International Conference on Artificial Intelligence
  and Statistics}, volume 151 of {\em Proceedings of Machine Learning
  Research}, pages 11145--11168. PMLR.

\bibitem[Agarwal et~al., 2005]{agarwal2005geometric}
Agarwal, P.~K., Har-Peled, S., Varadarajan, K.~R., et~al. (2005).
\newblock Geometric approximation via coresets.
\newblock {\em Combinatorial and computational geometry}, 52(1).

\bibitem[Bajaj, 1988]{bajaj1988algebraic}
Bajaj, C. (1988).
\newblock The algebraic degree of geometric optimization problems.
\newblock {\em Discrete \& Computational Geometry}, 3:177--191.

\bibitem[Balcan et~al., 2007]{balcan2007margin}
Balcan, M.-F., Broder, A., and Zhang, T. (2007).
\newblock Margin based active learning.
\newblock In {\em International Conference on Computational Learning Theory},
  pages 35--50. Springer.

\bibitem[Chen et~al., 2017]{chen2017distributed}
Chen, Y., Su, L., and Xu, J. (2017).
\newblock Distributed statistical machine learning in adversarial settings:
  {B}yzantine gradient descent.
\newblock {\em Proceedings of the ACM on Measurement and Analysis of Computing
  Systems}, 1(2):1--25.

\bibitem[Chen et~al., 2010]{chen2010super}
Chen, Y., Welling, M., and Smola, A. (2010).
\newblock Super-samples from kernel herding.
\newblock In {\em Proceedings of the Twenty-Sixth Conference on Uncertainty in
  Artificial Intelligence}, pages 109--116.

\bibitem[Cohen et~al., 2016]{cohen2016geometric}
Cohen, M.~B., Lee, Y.~T., Miller, G., Pachocki, J., and Sidford, A. (2016).
\newblock Geometric median in nearly linear time.
\newblock In {\em Proceedings of the forty-eighth annual ACM symposium on
  Theory of Computing}, pages 9--21.

\bibitem[Diakonikolas and Kane, 2019]{diakonikolas2019recent}
Diakonikolas, I. and Kane, D.~M. (2019).
\newblock Recent advances in algorithmic high-dimensional robust statistics.
\newblock {\em arXiv preprint arXiv:1911.05911}.

\bibitem[Feige, 1998]{feige1998threshold}
Feige, U. (1998).
\newblock A threshold of ln n for approximating set cover.
\newblock {\em Journal of the ACM (JACM)}, 45(4):634--652.

\bibitem[Feldman, 2020]{feldman2020core}
Feldman, D. (2020).
\newblock Core-sets: Updated survey.
\newblock In {\em Sampling techniques for supervised or unsupervised tasks},
  pages 23--44. Springer.

\bibitem[Feldman and Langberg, 2011]{feldman2011unified}
Feldman, D. and Langberg, M. (2011).
\newblock A unified framework for approximating and clustering data.
\newblock In {\em Proceedings of the forty-third annual ACM symposium on Theory
  of computing}, pages 569--578.

\bibitem[Goodfellow et~al., 2014]{goodfellow2014explaining}
Goodfellow, I.~J., Shlens, J., and Szegedy, C. (2014).
\newblock Explaining and harnessing adversarial examples.
\newblock {\em arXiv preprint arXiv:1412.6572}.

\bibitem[Har-Peled, 2011]{har2011geometric}
Har-Peled, S. (2011).
\newblock {\em Geometric approximation algorithms}.
\newblock Number 173. American Mathematical Soc.

\bibitem[Har-Peled et~al., 2006]{har2006maximum}
Har-Peled, S., Roth, D., and Zimak, D.~A. (2006).
\newblock Maximum margin coresets for active and noise tolerant learning.

\bibitem[He et~al., 2016]{he2016deep}
He, K., Zhang, X., Ren, S., and Sun, J. (2016).
\newblock Deep residual learning for image recognition.
\newblock In {\em Proceedings of the IEEE conference on computer vision and
  pattern recognition}, pages 770--778.

\bibitem[Hendrycks and Dietterich, 2019]{hendrycks2019benchmarking}
Hendrycks, D. and Dietterich, T. (2019).
\newblock Benchmarking neural network robustness to common corruptions and
  perturbations.
\newblock {\em arXiv preprint arXiv:1903.12261}.

\bibitem[Hessel et~al., 2021]{hessel2021clipscore}
Hessel, J., Holtzman, A., Forbes, M., Bras, R.~L., and Choi, Y. (2021).
\newblock Clipscore: A reference-free evaluation metric for image captioning.
\newblock {\em arXiv preprint arXiv:2104.08718}.

\bibitem[Hu et~al., 2018]{hu2018squeeze}
Hu, J., Shen, L., and Sun, G. (2018).
\newblock Squeeze-and-excitation networks.
\newblock In {\em Proceedings of the IEEE conference on computer vision and
  pattern recognition}, pages 7132--7141.

\bibitem[Huang et~al., 2010]{huang2010active}
Huang, S.-J., Jin, R., and Zhou, Z.-H. (2010).
\newblock Active learning by querying informative and representative examples.
\newblock {\em Advances in neural information processing systems}, 23.

\bibitem[Huber, 1992]{huber1992robust}
Huber, P.~J. (1992).
\newblock Robust estimation of a location parameter.
\newblock In {\em Breakthroughs in statistics}, pages 492--518. Springer.

\bibitem[Jiang et~al., 2018]{jiang2018mentornet}
Jiang, L., Zhou, Z., Leung, T., Li, L.-J., and Fei-Fei, L. (2018).
\newblock Mentornet: Learning data-driven curriculum for very deep neural
  networks on corrupted labels.
\newblock In {\em International conference on machine learning}, pages
  2304--2313. PMLR.

\bibitem[Joshi et~al., 2009]{joshi2009multi}
Joshi, A.~J., Porikli, F., and Papanikolopoulos, N. (2009).
\newblock Multi-class active learning for image classification.
\newblock In {\em 2009 ieee conference on computer vision and pattern
  recognition}, pages 2372--2379. IEEE.

\bibitem[Joshi and Mirzasoleiman, 2023]{joshi2023data}
Joshi, S. and Mirzasoleiman, B. (2023).
\newblock Data-efficient contrastive self-supervised learning: Most beneficial
  examples for supervised learning contribute the least.
\newblock In {\em International conference on machine learning}, pages
  15356--15370. PMLR.

\bibitem[Katharopoulos and Fleuret, 2018]{katharopoulos2018not}
Katharopoulos, A. and Fleuret, F. (2018).
\newblock Not all samples are created equal: Deep learning with importance
  sampling.
\newblock In {\em International conference on machine learning}, pages
  2525--2534. PMLR.

\bibitem[Kemperman, 1987]{kemperman1987median}
Kemperman, J. (1987).
\newblock The median of a finite measure on a banach space.
\newblock {\em Statistical data analysis based on the L1-norm and related
  methods (Neuch{\^a}tel, 1987)}, pages 217--230.

\bibitem[Lamport et~al., 1982]{lamport1982byzantine}
Lamport, L., SHOSTAK, R., and PEASE, M. (1982).
\newblock The byzantine generals problem.
\newblock {\em ACM Transactions on Programming Languages and Systems},
  4(3):382--401.

\bibitem[Li et~al., 2019]{li2019rsa}
Li, L., Xu, W., Chen, T., Giannakis, G.~B., and Ling, Q. (2019).
\newblock Rsa: {B}yzantine-robust stochastic aggregation methods for
  distributed learning from heterogeneous datasets.
\newblock In {\em Proceedings of the AAAI Conference on Artificial
  Intelligence}, volume~33, pages 1544--1551.

\bibitem[Li et~al., 2022]{li2022selective}
Li, S., Xia, X., Ge, S., and Liu, T. (2022).
\newblock Selective-supervised contrastive learning with noisy labels.
\newblock In {\em Proceedings of the IEEE/CVF conference on computer vision and
  pattern recognition}, pages 316--325.

\bibitem[Lopuhaa et~al., 1991]{lopuhaa1991breakdown}
Lopuhaa, H.~P., Rousseeuw, P.~J., et~al. (1991).
\newblock Breakdown points of affine equivariant estimators of multivariate
  location and covariance matrices.
\newblock {\em The Annals of Statistics}, 19(1):229--248.

\bibitem[Ma et~al., 2018a]{ma2018shufflenet}
Ma, N., Zhang, X., Zheng, H.-T., and Sun, J. (2018a).
\newblock Shufflenet v2: Practical guidelines for efficient cnn architecture
  design.
\newblock In {\em Proceedings of the European conference on computer vision
  (ECCV)}, pages 116--131.

\bibitem[Ma et~al., 2018b]{ma2018characterizing}
Ma, X., Li, B., Wang, Y., Erfani, S.~M., Wijewickrema, S., Schoenebeck, G.,
  Song, D., Houle, M.~E., and Bailey, J. (2018b).
\newblock Characterizing adversarial subspaces using local intrinsic
  dimensionality.
\newblock {\em arXiv preprint arXiv:1801.02613}.

\bibitem[Madry et~al., 2017]{madry2017towards}
Madry, A., Makelov, A., Schmidt, L., Tsipras, D., and Vladu, A. (2017).
\newblock Towards deep learning models resistant to adversarial attacks.
\newblock {\em arXiv preprint arXiv:1706.06083}.

\bibitem[Minsker et~al., 2015]{minsker2015geometric}
Minsker, S. et~al. (2015).
\newblock Geometric median and robust estimation in banach spaces.
\newblock {\em Bernoulli}, 21(4):2308--2335.

\bibitem[Muthukrishnan et~al., 2005]{muthukrishnan2005data}
Muthukrishnan, S. et~al. (2005).
\newblock Data streams: Algorithms and applications.
\newblock {\em Foundations and Trends{\textregistered} in Theoretical Computer
  Science}, 1(2):117--236.

\bibitem[Nemhauser et~al., 1978]{nemhauser1978analysis}
Nemhauser, G.~L., Wolsey, L.~A., and Fisher, M.~L. (1978).
\newblock An analysis of approximations for maximizing submodular set
  functions—i.
\newblock {\em Mathematical programming}, 14:265--294.

\bibitem[Park et~al., 2024]{park2024robust}
Park, D., Choi, S., Kim, D., Song, H., and Lee, J.-G. (2024).
\newblock Robust data pruning under label noise via maximizing re-labeling
  accuracy.
\newblock {\em Advances in Neural Information Processing Systems}, 36.

\bibitem[Patrini et~al., 2017]{patrini2017making}
Patrini, G., Rozza, A., Krishna~Menon, A., Nock, R., and Qu, L. (2017).
\newblock Making deep neural networks robust to label noise: A loss correction
  approach.
\newblock In {\em Proceedings of the IEEE conference on computer vision and
  pattern recognition}, pages 1944--1952.

\bibitem[Paul et~al., 2021]{paul2021deep}
Paul, M., Ganguli, S., and Dziugaite, G.~K. (2021).
\newblock Deep learning on a data diet: Finding important examples early in
  training.
\newblock {\em Advances in Neural Information Processing Systems},
  34:20596--20607.

\bibitem[Pleiss et~al., 2020]{pleiss2020identifying}
Pleiss, G., Zhang, T., Elenberg, E., and Weinberger, K.~Q. (2020).
\newblock Identifying mislabeled data using the area under the margin ranking.
\newblock {\em Advances in Neural Information Processing Systems},
  33:17044--17056.

\bibitem[Shah et~al., 2020]{shah2020choosing}
Shah, V., Wu, X., and Sanghavi, S. (2020).
\newblock Choosing the sample with lowest loss makes sgd robust.
\newblock In {\em International Conference on Artificial Intelligence and
  Statistics}, pages 2120--2130. PMLR.

\bibitem[Shen and Sanghavi, 2019]{shen2019learning}
Shen, Y. and Sanghavi, S. (2019).
\newblock Learning with bad training data via iterative trimmed loss
  minimization.
\newblock In {\em International Conference on Machine Learning}, pages
  5739--5748. PMLR.

\bibitem[Simonyan and Zisserman, 2014]{simonyan2014very}
Simonyan, K. and Zisserman, A. (2014).
\newblock Very deep convolutional networks for large-scale image recognition.
\newblock {\em arXiv preprint arXiv:1409.1556}.

\bibitem[Sorscher et~al., 2022]{sorscher2022beyond}
Sorscher, B., Geirhos, R., Shekhar, S., Ganguli, S., and Morcos, A. (2022).
\newblock Beyond neural scaling laws: beating power law scaling via data
  pruning.
\newblock {\em Advances in Neural Information Processing Systems},
  35:19523--19536.

\bibitem[Sugiyama and Kawanabe, 2012]{sugiyama2012machine}
Sugiyama, M. and Kawanabe, M. (2012).
\newblock {\em Machine learning in non-stationary environments: Introduction to
  covariate shift adaptation}.
\newblock MIT press.

\bibitem[Szegedy et~al., 2013]{szegedy2013intriguing}
Szegedy, C., Zaremba, W., Sutskever, I., Bruna, J., Erhan, D., Goodfellow, I.,
  and Fergus, R. (2013).
\newblock Intriguing properties of neural networks.
\newblock {\em arXiv preprint arXiv:1312.6199}.

\bibitem[Tan and Le, 2019]{tan2019efficientnet}
Tan, M. and Le, Q. (2019).
\newblock Efficientnet: Rethinking model scaling for convolutional neural
  networks.
\newblock In {\em International conference on machine learning}, pages
  6105--6114. PMLR.

\bibitem[Toneva et~al., 2018]{toneva2018empirical}
Toneva, M., Sordoni, A., Combes, R. T.~d., Trischler, A., Bengio, Y., and
  Gordon, G.~J. (2018).
\newblock An empirical study of example forgetting during deep neural network
  learning.
\newblock {\em arXiv preprint arXiv:1812.05159}.

\bibitem[Vardi and Zhang, 2000]{vardi2000multivariate}
Vardi, Y. and Zhang, C.-H. (2000).
\newblock The multivariate l1-median and associated data depth.
\newblock {\em Proceedings of the National Academy of Sciences},
  97(4):1423--1426.

\bibitem[Weber et~al., 1929]{weber1929alfred}
Weber, A., Friedrich, C.~J., et~al. (1929).
\newblock {\em Alfred Weber's theory of the location of industries}.
\newblock The University of Chicago Press.

\bibitem[Weiszfeld, 1937]{weiszfeld1937point}
Weiszfeld, E. (1937).
\newblock Sur le point pour lequel la somme des distances de n points
  donn{\'e}s est minimum.
\newblock {\em Tohoku Mathematical Journal, First Series}, 43:355--386.

\bibitem[Welling, 2009]{welling2009herding}
Welling, M. (2009).
\newblock Herding dynamical weights to learn.
\newblock In {\em Proceedings of the 26th Annual International Conference on
  Machine Learning}, pages 1121--1128.

\bibitem[Wu et~al., 2020]{wu2020federated}
Wu, Z., Ling, Q., Chen, T., and Giannakis, G.~B. (2020).
\newblock Federated variance-reduced stochastic gradient descent with
  robustness to byzantine attacks.
\newblock {\em IEEE Transactions on Signal Processing}, 68:4583--4596.

\bibitem[Xia et~al., 2022]{xia2022moderate}
Xia, X., Liu, J., Yu, J., Shen, X., Han, B., and Liu, T. (2022).
\newblock Moderate coreset: A universal method of data selection for real-world
  data-efficient deep learning.
\newblock In {\em The Eleventh International Conference on Learning
  Representations}.

\bibitem[Xia et~al., 2020]{xia2020robust}
Xia, X., Liu, T., Han, B., Gong, C., Wang, N., Ge, Z., and Chang, Y. (2020).
\newblock Robust early-learning: Hindering the memorization of noisy labels.
\newblock In {\em International conference on learning representations}.

\bibitem[Xu et~al., 2020]{xu2020theory}
Xu, Y., Zhao, S., Song, J., Stewart, R., and Ermon, S. (2020).
\newblock A theory of usable information under computational constraints.
\newblock {\em arXiv preprint arXiv:2002.10689}.

\bibitem[Yang et~al., 2022]{yang2022dataset}
Yang, S., Xie, Z., Peng, H., Xu, M., Sun, M., and Li, P. (2022).
\newblock Dataset pruning: Reducing training data by examining generalization
  influence.
\newblock {\em arXiv preprint arXiv:2205.09329}.

\bibitem[Zhang and Sabuncu, 2018]{zhang2018generalized}
Zhang, Z. and Sabuncu, M. (2018).
\newblock Generalized cross entropy loss for training deep neural networks with
  noisy labels.
\newblock {\em Advances in neural information processing systems}, 31.

\end{thebibliography}
\clearpage
\onecolumn
\begin{center}
    \textbf{\Large Supplementary Material}\vspace{5mm}
\end{center}

\section{Experiments}
In this section, we outline our experimental setup, present our key empirical findings, and discuss deeper insights into the performance of $\gm$ Matching. 

\subsection{Experimental Setup}
{\bf A. Baselines: } 

To ensure reproducibility, our experimental setup is identical to~\citep{xia2022moderate}. We compare the proposed $\gm$ Matching selection strategy against the following popular data pruning strategies as baselines for comparison:  (1) Random; (2) Herding \cite{welling2009herding}; (3) Forgetting \cite{toneva2018empirical}; (4) GraNd-score \cite{paul2021deep}; (5) EL2N-score \cite{paul2021deep}; (6) Optimization-based \cite{yang2022dataset}; (7) Self-sup.-selection \cite{sorscher2022beyond} and (8) Moderate~\citep{xia2022moderate}. We do not run these baselines for be these baselines are borrowed from~\citep{xia2020robust}.

Additionally, for further ablations we compare $\gm$ Matching with many (natural) distance based geometric pruning strategies: ({\bf \textsc{Uniform}}) Random Sampling, ({\bf \textsc{Easy}}) Selection of samples closest to the centroid; ({\bf \textsc{Hard}}) Selection of samples farthest from the centroid; ({\bf \textsc{Moderate}})~\citep{xia2022moderate} Selection of samples closest to the median distance from the centroid; ({\bf \textsc{Herding}}) Moment Matching~\citep{chen2010super}, ({\bf \textsc{GM Matching}}) Robust Moment (GM) Matching~\eqref{eq:gm_matching}.

{\bf B. Datasets and Networks :}

We perform extensive experiments across three popular image classification datasets - CIFAR10, CIFAR100 and Tiny-ImageNet. Our experiments span popular deep nets including ResNet-18/50~\citep{he2016deep}, VGG-16~\citep{simonyan2014very}, ShuffleNet~\citep{ma2018shufflenet}, SENet~\citep{hu2018squeeze}, EfficientNet-B0\citep{tan2019efficientnet}. 

{\bf C. Implementation Details : } 

For the CIFAR-10/100 experiments, we utilize a batch size of 128 and employ SGD optimizer with a momentum of 0.9, weight decay of 5e-4, and an initial learning rate of 0.1. The learning rate is reduced by a factor of 5 after the 60th, 120th, and 160th epochs, with a total of 200 epochs. Data augmentation techniques include random cropping and random horizontal flipping. In the Tiny-ImageNet experiments, a batch size of 256 is used with an SGD optimizer, momentum of 0.9, weight decay of 1e-4, and an initial learning rate of 0.1. The learning rate is decreased by a factor of 10 after the 30th and 60th epochs, with a total of 90 epochs. Random horizontal flips are applied for data augmentation. Each experiment is repeated over 5 random seeds and the variances are noted.

{\bf D. Proxy Model :}

Needless to say, identifying sample importance is an ill-posed problem without some notion of similarity among the samples. Thus, it is common to assume access to a proxy encoder $\phi_\mB(\cdot): \sR^d \to \gH \in \sR^p$ that maps the features to a separable Hilbert space equiped with inner product. Intuitively, this simply means that the embedding space of the encoder fosters proximity of semantically similar examples, while enforcing the separation of dissimilar ones -- a property often satisfied by even off-the-shelf pretrained foundation models~\citep{hessel2021clipscore, sorscher2022beyond}. We perform experiments across multiple choices of such proxy encoder scenarios: In~\cref{tab:clean}-~\ref{tab:Adv-A}, we show results in the {\bf standard setting: } when the proxy model shares the same architecture as the model e.g. ResNet50 to be trained downstream, and is pretrained (supervised) on the clean target dataset e.g. TinyimageNet. However, we also experiment with {(a) \bf distribution shift: } proxy model pretrained on a different (distribution shifted) dataset e.g. ImageNet and used to sample from Mini ImageNet. {(b) Network Transfer: } Where, the proxy has a different network compared to the classifier. We describe these experiments in more detail in~\cref{sec:proxy_ablations}.

\begin{table*}
\footnotesize
\centering
\resizebox{\textwidth}{!}
{
    \begin{tabular}{lccccccc}
        \midrule
        \rowcolor{light-gray}
        \multicolumn{8}{c}{{\bf CIFAR-100}}
        \\
        \midrule
        {\textbf{Method / Ratio}} 
        & \textbf{20\%} 
        & \textbf{30\%} 
        & \textbf{40\%}
        & \textbf{60\%}
        & \textbf{80\%}
        & \textbf{100\%}
        & \textbf{Mean $\uparrow$}
        \\ 
        \midrule
        Random              
        & 50.26$\pm$3.24 
        & 53.61$\pm$2.73 
        & 64.32$\pm$1.77 
        & 71.03$\pm$0.75 
        & 74.12$\pm$0.56 
        & 78.14$\pm$0.55 
        & 62.67
        \\ 
        Herding               
        & 48.39$\pm$1.42 
        & 50.89$\pm$0.97 
        & 62.99$\pm$0.61 
        & 70.61$\pm$0.44 
        & 74.21$\pm$0.49 
        & 78.14$\pm$0.55 
        & 61.42
        \\ 
        Forgetting            
        & 35.57$\pm$1.40 
        & 49.83$\pm$0.91 
        & 59.65$\pm$2.50 
        & \textbf{73.34$\pm$0.39} 
        & \textbf{77.50$\pm$0.53} 
        & 78.14$\pm$0.55 
        & 59.18
        \\ 
        GraNd-score           
        & 42.65$\pm$1.39 
        & 53.14$\pm$1.28 
        & 60.52$\pm$0.79 
        & 69.70$\pm$0.68 
        & 74.67$\pm$0.79 
        & 78.14$\pm$0.55
        & 60.14
        \\ 
        EL2N-score            
        & 27.32$\pm$1.16 
        & 41.98$\pm$0.54 
        & 50.47$\pm$1.20 
        & 69.23$\pm$1.00 
        & 75.96$\pm$0.88 
        & 78.14$\pm$0.55
        & 52.99
        \\ 
        Optimization-based    
        & 42.16$\pm$3.30 
        & 53.19$\pm$2.14 
        & 58.93$\pm$0.98 
        & 68.93$\pm$0.70 
        & 75.62$\pm$0.33 
        & 78.14$\pm$0.55 
        & 59.77
        \\
        Self-sup.-selection   
        & 44.45$\pm$2.51
        & 54.63$\pm$2.10 
        & 62.91$\pm$1.20 
        & 70.70$\pm$0.82 
        & 75.29$\pm$0.45 
        & 78.14$\pm$0.55 
        & 61.60
        \\ 
        Moderate-DS
        & 51.83$\pm$0.52
        & 57.79$\pm$1.61
        & 64.92$\pm$0.93
        & 71.87$\pm$0.91 
        & 75.44$\pm$0.40 
        & 78.14$\pm$0.55 
        & 64.37
        \\ 
        {\bf $\gm$ Matching}  
        & {\bf 55.93$\pm$ 0.48}
        & {\bf 63.08$\pm$ 0.57}
        & {\bf 66.59$\pm$ 1.18} 
        & 70.82$\pm$ 0.59
        & 74.63$\pm$ 0.86
        & 78.14$\pm$ 0.55  
        & {\bf 66.01}
        \\
        \midrule
        \rowcolor{light-gray}
        \multicolumn{8}{c}{{\bf Tiny ImageNet}}
        \\
        \midrule
        Random              
        & 24.02$\pm$0.41 
        & 29.79$\pm$0.27 
        & 34.41$\pm$0.46 
        & 40.96$\pm$0.47 
        & 45.74$\pm$0.61 
        & 49.36$\pm$0.25 
        & 34.98 
        \\ 
        Herding               
        & 24.09$\pm$0.45 
        & 29.39$\pm$0.53 
        & 34.13$\pm$0.37 
        & 40.86$\pm$0.61 
        & 45.45$\pm$0.33 
        & 49.36$\pm$0.25 
        & 34.78 
        \\ 
        Forgetting            
        & 22.37$\pm$0.71 
        & 28.67$\pm$0.54 
        & 33.64$\pm$0.32 
        & 41.14$\pm$0.43 
        & \textbf{46.77$\pm$0.31} 
        & 49.36$\pm$0.25 
        & 34.52
        \\ 
        GraNd-score           
        & 23.56$\pm$0.52 
        & 29.66$\pm$0.37 
        & 34.33$\pm$0.50 
        & 40.77$\pm$0.42 
        & 45.96$\pm$0.56 
        & 49.36$\pm$0.25 
        & 34.86 
        \\ 
        EL2N-score            
        & 19.74$\pm$0.26 
        & 26.58$\pm$0.40 
        & 31.93$\pm$0.28 
        & 39.12$\pm$0.46 
        & 45.32$\pm$0.27 
        & 49.36$\pm$0.25 
        & 32.54
        \\ 
        Optimization-based    
        & 13.88$\pm$2.17 
        & 23.75$\pm$1.62 
        & 29.77$\pm$0.94 
        & 37.05$\pm$2.81 
        & 43.76$\pm$1.50 
        & 49.36$\pm$0.25 
        & 29.64
        \\
        Self-sup.-selection   
        & 20.89$\pm$0.42 
        & 27.66$\pm$0.50 
        & 32.50$\pm$0.30 
        & 39.64$\pm$0.39 
        & 44.94$\pm$0.34 
        & 49.36$\pm$0.25 
        & 33.13
        \\ 
        Moderate-DS
        & 25.29$\pm$0.38
        & 30.57$\pm$0.20
        & 34.81$\pm$0.51
        & 41.45$\pm$0.44
        & 46.06$\pm$0.33 
        & 49.36$\pm$0.25 
        & 35.64
        \\ 
        {\bf $\gm$ Matching}  
        & {\bf 27.88$\pm$0.19}
        & {\bf 33.15$\pm$0.26}
        & {\bf 36.92$\pm$0.40}
        & {\bf 42.48$\pm$0.12}
        & 46.75$\pm$0.51
        & 49.36$\pm$0.25 
        & {\bf 37.44}
        \\
        \bottomrule
    \end{tabular}
}
    \caption{
    \footnotesize
    {\bf No Corruption :} Comparing (Test Accuracy) pruning algorithms on CIFAR-100 and Tiny-ImageNet in the uncorrupted setting. ResNet-50 is used both as proxy and for downstream classification.}
    \label{tab:clean}
\end{table*}

\subsection{Ideal (No Corruption) Scenario}
Our first sets of experiments involve performing data pruning across selection ratio ranging from 20\% - 80\% in the uncorrupted setting. The corresponding results, presented in~\cref{tab:clean}, indicate that while $\gm$ Matching is developed with robustness scenarios in mind, it outperforms the existing strong baselines even in the clean setting. Overall, on both CIFAR-100 and Tiny ImageNet $\gm$ Matching improves over the prior methods > 2\% on an average. In particular, we note that $\gm$ Matching enjoys larger gains in the low data selection regime, while staying competitive at low pruning rates.

\begin{table*}
\footnotesize
\centering
\resizebox{\textwidth}{!}
{
    \begin{tabular}{lccccccc}
        \toprule
        \rowcolor{light-gray}
        \multicolumn{8}{c}{{\bf CIFAR-100}}
        \\
        \midrule
        {\textbf{Method / Selection ratio}} 
        & \textbf{20\%} 
        & \textbf{30\%} 
        & \textbf{40\%}
        & \textbf{60\%}
        & \textbf{80\%}
        & \textbf{100\%}
        & \textbf{Mean $\uparrow$}
        \\ 
        \midrule
        \multicolumn{8}{c}{{\bf No Corruption}}
        \\
        \midrule
        Random              
        & 50.26$\pm$3.24 
        & 53.61$\pm$2.73 
        & 64.32$\pm$1.77 
        & 71.03$\pm$0.75 
        & 74.12$\pm$0.56 
        & 78.14$\pm$0.55 
        & 62.67
        \\ 
        Herding               
        & 48.39$\pm$1.42 
        & 50.89$\pm$0.97 
        & 62.99$\pm$0.61 
        & 70.61$\pm$0.44 
        & 74.21$\pm$0.49 
        & 78.14$\pm$0.55 
        & 61.42
        \\ 
        Forgetting            
        & 35.57$\pm$1.40 
        & 49.83$\pm$0.91 
        & 59.65$\pm$2.50 
        & \textbf{73.34$\pm$0.39} 
        & \textbf{77.50$\pm$0.53} 
        & 78.14$\pm$0.55 
        & 59.18
        \\ 
        GraNd-score           
        & 42.65$\pm$1.39 
        & 53.14$\pm$1.28 
        & 60.52$\pm$0.79 
        & 69.70$\pm$0.68 
        & 74.67$\pm$0.79 
        & 78.14$\pm$0.55
        & 60.14
        \\ 
        EL2N-score            
        & 27.32$\pm$1.16 
        & 41.98$\pm$0.54 
        & 50.47$\pm$1.20 
        & 69.23$\pm$1.00 
        & 75.96$\pm$0.88 
        & 78.14$\pm$0.55
        & 52.99
        \\ 
        Optimization-based    
        & 42.16$\pm$3.30 
        & 53.19$\pm$2.14 
        & 58.93$\pm$0.98 
        & 68.93$\pm$0.70 
        & 75.62$\pm$0.33 
        & 78.14$\pm$0.55 
        & 59.77
        \\
        Self-sup.-selection   
        & 44.45$\pm$2.51
        & 54.63$\pm$2.10 
        & 62.91$\pm$1.20 
        & 70.70$\pm$0.82 
        & 75.29$\pm$0.45 
        & 78.14$\pm$0.55 
        & 61.60
        \\ 
        Moderate-DS
        & 51.83$\pm$0.52
        & 57.79$\pm$1.61
        & 64.92$\pm$0.93
        & 71.87$\pm$0.91 
        & 75.44$\pm$0.40 
        & 78.14$\pm$0.55 
        & 64.37
        \\ 
        {\bf $\gm$ Matching}  
        & {\bf 55.93$\pm$ 0.48}
        & {\bf 63.08$\pm$ 0.57}
        & {\bf 66.59$\pm$ 1.18} 
        & 70.82$\pm$ 0.59
        & 74.63$\pm$ 0.86
        & 78.14$\pm$ 0.55  
        & {\bf 66.01}
        \\
        \midrule
        \multicolumn{8}{c}{{\bf 5\% Feature Corruption}}
        \\
        \midrule
        Random              
        & 43.14$\pm$3.04 
        & 54.19$\pm$2.92 
        & 64.21$\pm$2.39 
        & 69.50$\pm$1.06 
        & 72.90$\pm$0.52 
        & 77.26$\pm$0.39 
        & 60.79
        \\ 
        Herding               
        & 42.50$\pm$1.27 
        & 53.88$\pm$3.07 
        & 60.54$\pm$0.94 
        & 69.15$\pm$0.55 
        & 73.47$\pm$0.89 
        & 77.26$\pm$0.39 
        & 59.81
        \\ 
        Forgetting            
        & 32.42$\pm$0.74 
        & 49.72$\pm$1.64 
        & 54.84$\pm$2.20 
        & 70.22$\pm$2.00 
        & 75.19$\pm$0.40 
        & 77.26$\pm$0.39 
        & 56.48
        \\ 
        GraNd-score           
        & 42.24$\pm$0.57 
        & 53.48$\pm$0.76 
        & 60.17$\pm$1.66 
        & 69.16$\pm$0.81 
        & 73.35$\pm$0.81 
        & 77.26$\pm$0.39 
        & 59.68
        \\ 
        EL2N-score            
        & 26.13$\pm$1.75 
        & 39.01$\pm$1.42 
        & 49.89$\pm$1.87 
        & 68.36$\pm$1.41 
        & 73.10$\pm$0.36 
        & 77.26$\pm$0.39 
        & 51.30
        \\ 
        Optimization-based    
        & 38.25$\pm$3.04 
        & 50.88$\pm$6.07 
        & 57.26$\pm$0.93 
        & 68.02$\pm$0.39 
        & 73.77$\pm$0.56 
        & 77.26$\pm$0.39 
        & 57.64
        \\
        Self-sup.-selection   
        & 44.24$\pm$0.48 
        & 55.99$\pm$1.21 
        & 61.03$\pm$0.59 
        & 69.96$\pm$1.07 
        & 74.56$\pm$1.17 
        & 77.26$\pm$0.39 
        & 61.16
        \\ 
        Moderate-DS
        & 46.78$\pm$1.90
        & 57.36$\pm$1.22 
        & 65.40$\pm$1.19
        & 71.46$\pm$0.19 
        & {\bf 75.64$\pm$0.61} 
        & 77.26$\pm$0.39
        & 63.33
        \\
        {\bf $\gm$ Matching}
        & {\bf 49.50$\pm$0.72}
        & {\bf 60.23$\pm$0.88}
        & {\bf 66.25$\pm$0.51}
        & {\bf 72.91$\pm$0.26}
        & 75.10$\pm$0.29
        & 77.26$\pm$0.39
        & {\bf 64.80}
        \\
        \midrule
        \multicolumn{8}{c}{{\bf 10\% Feature Corruption}}
        \\
        \midrule
        Random              
        & 43.27$\pm$3.01 
        & 53.94$\pm$2.78 
        & 62.17$\pm$1.29 
        & 68.41$\pm$1.21 
        & 73.50$\pm$0.73 
        & 76.50$\pm$0.63 
        & 60.26
        \\ 
        Herding               
        & 44.34$\pm$1.07 
        & 53.31$\pm$1.49 
        & 60.13$\pm$0.38 
        & 68.20$\pm$0.74 
        & 74.34$\pm$1.07 
        & 76.50$\pm$0.63 
        & 60.06
        \\ 
        Forgetting            
        & 30.43$\pm$0.70 
        & 47.50$\pm$1.43 
        & 53.16$\pm$0.44 
        & 70.36$\pm$0.82 
        & 75.11$\pm$0.71 
        & 76.50$\pm$0.63 
        & 55.31
        \\ 
        GraNd-score           
        & 36.36$\pm$1.06 
        & 52.26$\pm$0.66 
        & 60.22$\pm$1.39 
        & 68.96$\pm$0.62 
        & 72.78$\pm$0.51 
        & 76.50$\pm$0.63 
        & 58.12
        \\ 
        EL2N-score            
        & 21.75$\pm$1.56 
        & 30.80$\pm$2.23 
        & 41.06$\pm$1.23 
        & 64.82$\pm$1.48 
        & 73.47$\pm$1.30 
        & 76.50$\pm$0.63 
        & 46.38
        \\ 
        Optimization-based    
        & 37.22$\pm$0.39 
        & 48.92$\pm$1.38 
        & 56.88$\pm$1.48 
        & 67.33$\pm$2.15 
        & 72.94$\pm$1.90 
        & 76.50$\pm$0.63 
        & 56.68
        \\
        Self-sup.-selection   
        & 42.01$\pm$1.31 
        & 54.47$\pm$1.19 
        & 61.37$\pm$0.68 
        & 68.52$\pm$1.24 
        & 74.73$\pm$0.36 
        & 76.50$\pm$0.63 
        & 60.22
        \\ 
        Moderate-DS
        & 47.02$\pm$0.66 
        & 55.60$\pm$1.67 
        & 62.18$\pm$1.86
        & 71.83$\pm$0.78
        & {\bf 75.66$\pm$0.66}
        & 76.50$\pm$0.63
        & 62.46
        \\
        {\bf $\gm$ Matching}
        & {\bf 48.86$\pm$1.02}
        & {\bf 60.15$\pm$0.43}
        & {\bf 66.92$\pm$0.28}
        & {\bf 72.03$\pm$0.38}
        & 73.71$\pm$0.19
        & 76.50$\pm$0.63
        & {\bf 64.33}
        \\
        \midrule
        \multicolumn{8}{c}{{\bf 20\% Feature Corruption}}
        \\
        \midrule
        Random              
        & 40.99$\pm$1.46 
        & 50.38$\pm$1.39 
        & 57.24$\pm$0.65 
        & 65.21$\pm$1.31 
        & 71.74$\pm$0.28 
        & 74.92$\pm$0.88 
        & 57.11
        \\ 
        Herding               
        & 44.42$\pm$0.46 
        & 53.57$\pm$0.31 
        & 60.72$\pm$1.78 
        & 69.09$\pm$1.73 
        & 73.08$\pm$0.98 
        & 74.92$\pm$0.88 
        & 60.18
        \\ 
        Forgetting            
        & 26.39$\pm$0.17 
        & 40.78$\pm$2.02 
        & 49.95$\pm$2.31 
        & 65.71$\pm$1.12 
        & 73.67$\pm$1.12 
        & 74.92$\pm$0.88 
        & 51.30
        \\ 
        GraNd-score           
        & 36.33$\pm$2.66 
        & 46.21$\pm$1.48 
        & 55.51$\pm$0.76 
        & 64.59$\pm$2.40 
        & 70.14$\pm$1.36 
        & 74.92$\pm$0.88 
        & 54.56
        \\ 
        EL2N-score            
        & 21.64$\pm$2.03 
        & 23.78$\pm$1.66 
        & 35.71$\pm$1.17 
        & 56.32$\pm$0.86 
        & 69.66$\pm$0.43 
        & 74.92$\pm$0.88 
        & 41.42
        \\ 
        Optimization-based    
        & 33.42$\pm$1.60 
        & 45.37$\pm$2.81 
        & 54.06$\pm$1.74 
        & 65.19$\pm$1.27 
        & 70.06$\pm$0.83 
        & 74.92$\pm$0.88 
        & 54.42
        \\
        Self-sup.-selection   
        & 42.61$\pm$2.44 
        & 54.04$\pm$1.90 
        & 59.51$\pm$1.22 
        & 68.97$\pm$0.96 
        & 72.33$\pm$0.20 
        & 74.92$\pm$0.88 
        & 60.01
        \\ 
        Moderate-DS
        & 42.98$\pm$0.87
        & 55.80$\pm$0.95
        & 61.84$\pm$1.96
        & 70.05$\pm$1.29 
        & 73.67$\pm$0.30
        & 74.92$\pm$0.88
        & 60.87
        \\
        {\bf $\gm$ Matching}
        & {\bf 47.12$\pm$0.64}
        & {\bf 59.17$\pm$0.92}
        & {\bf 63.45$\pm$0.34}
        & {\bf 71.70$\pm$0.60}
        & {\bf 74.60$\pm$1.03}
        & 74.92$\pm$0.88
        & {\bf 63.21}
        \\
        \bottomrule
    \end{tabular}
}
    \caption{
    \footnotesize
    {\bf Image Corruption ( CIFAR 100 ):} Comparing (Test Accuracy) pruning methods when 20\% of the images are corrupted. ResNet-50 is used both as proxy and for downstream classification.}
    \label{tab:cifarC}
\end{table*}

\begin{table*}
%\footnotesize
\centering
\resizebox{\textwidth}{!}
{
    \begin{tabular}{lccccccc}
        \toprule
        \rowcolor{light-gray}
        \multicolumn{8}{c}{{\bf Tiny ImageNet}}
        \\
        \midrule
        {\textbf{Method / Ratio}} 
        & \textbf{20\%} 
        & \textbf{30\%} 
        & \textbf{40\%}
        & \textbf{60\%}
        & \textbf{80\%}
        & \textbf{100\%}
        & \textbf{Mean $\uparrow$}
        \\ 
        \midrule
        \multicolumn{8}{c}{{\bf No Corruption}}
        \\
        \midrule
        Random              
        & 24.02$\pm$0.41 
        & 29.79$\pm$0.27 
        & 34.41$\pm$0.46 
        & 40.96$\pm$0.47 
        & 45.74$\pm$0.61 
        & 49.36$\pm$0.25 
        & 34.98 
        \\ 
        Herding               
        & 24.09$\pm$0.45 
        & 29.39$\pm$0.53 
        & 34.13$\pm$0.37 
        & 40.86$\pm$0.61 
        & 45.45$\pm$0.33 
        & 49.36$\pm$0.25 
        & 34.78 
        \\ 
        Forgetting            
        & 22.37$\pm$0.71 
        & 28.67$\pm$0.54 
        & 33.64$\pm$0.32 
        & 41.14$\pm$0.43 
        & \textbf{46.77$\pm$0.31} 
        & 49.36$\pm$0.25 
        & 34.52
        \\ 
        GraNd-score           
        & 23.56$\pm$0.52 
        & 29.66$\pm$0.37 
        & 34.33$\pm$0.50 
        & 40.77$\pm$0.42 
        & 45.96$\pm$0.56 
        & 49.36$\pm$0.25 
        & 34.86 
        \\ 
        EL2N-score            
        & 19.74$\pm$0.26 
        & 26.58$\pm$0.40 
        & 31.93$\pm$0.28 
        & 39.12$\pm$0.46 
        & 45.32$\pm$0.27 
        & 49.36$\pm$0.25 
        & 32.54
        \\ 
        Optimization-based    
        & 13.88$\pm$2.17 
        & 23.75$\pm$1.62 
        & 29.77$\pm$0.94 
        & 37.05$\pm$2.81 
        & 43.76$\pm$1.50 
        & 49.36$\pm$0.25 
        & 29.64
        \\
        Self-sup.-selection   
        & 20.89$\pm$0.42 
        & 27.66$\pm$0.50 
        & 32.50$\pm$0.30 
        & 39.64$\pm$0.39 
        & 44.94$\pm$0.34 
        & 49.36$\pm$0.25 
        & 33.13
        \\ 
        Moderate-DS
        & 25.29$\pm$0.38
        & 30.57$\pm$0.20
        & 34.81$\pm$0.51
        & 41.45$\pm$0.44
        & 46.06$\pm$0.33 
        & 49.36$\pm$0.25 
        & 35.64
        \\ 
        {\bf $\gm$ Matching}  
        & {\bf 27.88$\pm$0.19}
        & {\bf 33.15$\pm$0.26}
        & {\bf 36.92$\pm$0.40}
        & {\bf 42.48$\pm$0.12}
        & 46.75$\pm$0.51
        & 49.36$\pm$0.25 
        & {\bf 37.44}
        \\
        \midrule
        \multicolumn{8}{c}{{\bf 5\% Feature Corruption}}
        \\
        \midrule
        Random              
        & 23.51$\pm$0.22 
        & 28.82$\pm$0.72 
        & 32.61$\pm$0.68 
        & 39.77$\pm$0.35 
        & 44.37$\pm$0.34 
        & 49.02$\pm$0.35 
        & 33.82
        \\ 
        Herding               
        & 23.09$\pm$0.53 
        & 28.67$\pm$0.37 
        & 33.09$\pm$0.32 
        & 39.71$\pm$0.31 
        & 45.04$\pm$0.15 
        & 49.02$\pm$0.35 
        & 33.92
        \\ 
        Forgetting            
        & 21.36$\pm$0.28 
        & 27.72$\pm$0.43 
        & 33.45$\pm$0.21 
        & 40.92$\pm$0.45
        & 45.99$\pm$0.51 
        & 49.02$\pm$0.35 
        & 33.89
        \\ 
        GraNd-score           
        & 22.47$\pm$0.23 
        & 28.85$\pm$0.83 
        & 33.81$\pm$0.24 
        & 40.40$\pm$0.15 
        & 44.86$\pm$0.49 
        & 49.02$\pm$0.35 
        & 34.08
        \\ 
        EL2N-score            
        & 18.98$\pm$0.72 
        & 25.96$\pm$0.28 
        & 31.07$\pm$0.63 
        & 38.65$\pm$0.36 
        & 44.21$\pm$0.68 
        & 49.02$\pm$0.35 
        & 31.77
        \\ 
        Optimization-based    
        & 13.65$\pm$1.26 
        & 24.02$\pm$1.35 
        & 29.65$\pm$1.86 
        & 36.55$\pm$1.84 
        & 43.64$\pm$0.71 
        & 49.02$\pm$0.35 
        & 29.50
        \\
        Self-sup.-selection   
        & 19.35$\pm$0.57 
        & 26.11$\pm$0.31 
        & 31.90$\pm$0.37 
        & 38.91$\pm$0.29 
        & 44.43$\pm$0.42 
        & 49.02$\pm$0.35 
        & 32.14
        \\ 
        Moderate-DS
        & 24.63$\pm$0.78
        & 30.27$\pm$0.16
        & 34.84$\pm$0.24
        & 40.86$\pm$0.42 
        & 45.60$\pm$0.31 
        & 49.02$\pm$0.35 
        & 35.24
        \\
        {\bf $\gm$ Matching}
        & {\bf 27.46$\pm$1.22}
        & {\bf 33.14$\pm$0.61}
        & {\bf 35.76$\pm$1.14}
        & {\bf 41.62$\pm$0.71}
        & {\bf 46.83$\pm$0.56}
        & 49.02$\pm$0.35 
        & {\bf 36.96}
        \\
        \midrule
        \multicolumn{8}{c}{{\bf 10\% Feature Corruption}}
        \\
        \midrule
        Random              
        & 22.67$\pm$0.27 
        & 28.67$\pm$0.52 
        & 31.88$\pm$0.30 
        & 38.63$\pm$0.36 
        & 43.46$\pm$0.20 
        & 48.40$\pm$0.32 
        & 33.06
        \\ 
        Herding               
        & 22.01$\pm$0.18 
        & 27.82$\pm$0.11 
        & 31.82$\pm$0.26 
        & 39.37$\pm$0.18 
        & 44.18$\pm$0.27 
        & 48.40$\pm$0.32 
        & 33.04
        \\ 
        Forgetting            
        & 20.06$\pm$0.48 
        & 27.17$\pm$0.36 
        & 32.31$\pm$0.22 
        & 40.19$\pm$0.29 
        & 45.51$\pm$0.48 
        & 48.40$\pm$0.32 
        & 33.05
        \\ 
        GraNd-score           
        & 21.52$\pm$0.48 
        & 26.98$\pm$0.43 
        & 32.70$\pm$0.19 
        & 40.03$\pm$0.26 
        & 44.87$\pm$0.35 
        & 48.40$\pm$0.32 
        & 33.22
        \\ 
        EL2N-score            
        & 18.59$\pm$0.13 
        & 25.23$\pm$0.18 
        & 30.37$\pm$0.22 
        & 38.44$\pm$0.32 
        & 44.32$\pm$1.07 
        & 48.40$\pm$0.32 
        & 31.39
        \\ 
        Optimization-based    
        & 14.05$\pm$1.74 
        & 29.18$\pm$1.77 
        & 29.12$\pm$0.61 
        & 36.28$\pm$1.88 
        & 43.52$\pm$0.31 
        & 48.40$\pm$0.32 
        & 29.03
        \\
        Self-sup.-selection   
        & 19.47$\pm$0.26 
        & 26.51$\pm$0.55 
        & 31.78$\pm$0.14 
        & 38.87$\pm$0.54 
        & 44.69$\pm$0.29 
        & 48.40$\pm$0.32 
        & 32.26
        \\ 
        Moderate-DS
        & 23.79$\pm$0.16 
        & 29.56$\pm$0.16 
        & 34.60$\pm$0.12 
        & 40.36$\pm$0.27 
        & 45.10$\pm$0.23 
        & 48.40$\pm$0.32 
        & 34.68
        \\
        {\bf $\gm$ Matching}
        & {\bf 27.41$\pm$0.23}
        & {\bf 32.84$\pm$0.98}
        & {\bf 36.27$\pm$0.68}
        & {\bf 41.85$\pm$0.29}
        & {\bf 46.35$\pm$0.44}
        & 48.40$\pm$0.32
        & {\bf 36.94}
        \\
        \midrule
        \multicolumn{8}{c}{{\bf 20\% Feature Corruption}}
        \\
        \midrule
        Random              
        & 19.99$\pm$0.42 
        & 25.93$\pm$0.53 
        & 30.83$\pm$0.44 
        & 37.98$\pm$0.31 
        & 42.96$\pm$0.62 
        & 46.68$\pm$0.43 
        & 31.54
        \\ 
        Herding               
        & 19.46$\pm$0.14 
        & 24.47$\pm$0.33 
        & 29.72$\pm$0.39 
        & 37.50$\pm$0.59 
        & 42.28$\pm$0.30 
        & 46.68$\pm$0.43 
        & 30.86
        \\ 
        Forgetting            
        & 18.47$\pm$0.46 
        & 25.53$\pm$0.23 
        & 31.17$\pm$0.24 
        & 39.35$\pm$0.44 
        & 44.55$\pm$0.67 
        & 46.68$\pm$0.43 
        & 31.81
        \\ 
        GraNd-score           
        & 20.07$\pm$0.49 
        & 26.68$\pm$0.40 
        & 31.25$\pm$0.40 
        & 38.21$\pm$0.49 
        & 42.84$\pm$0.72 
        & 46.68$\pm$0.43 
        & 30.53
        \\ 
        EL2N-score            
        & 18.57$\pm$0.30 
        & 24.42$\pm$0.44 
        & 30.04$\pm$0.15 
        & 37.62$\pm$0.44 
        & 42.43$\pm$0.61 
        & 46.68$\pm$0.43 
        & 30.53
        \\ 
        Optimization-based    
        & 13.71$\pm$0.26 
        & 23.33$\pm$1.84 
        & 29.15$\pm$2.84 
        & 36.12$\pm$1.86 
        & 42.94$\pm$0.52 
        & 46.88$\pm$0.43 
        & 29.06
        \\
        Self-sup.-selection   
        & 20.22$\pm$0.23 
        & 26.90$\pm$0.50 
        & 31.93$\pm$0.49 
        & 39.74$\pm$0.52 
        & 44.27$\pm$0.10 
        & 46.68$\pm$0.43 
        & 32.61
        \\ 
        Moderate-DS
        & 23.27$\pm$0.33 
        & 29.06$\pm$0.36 
        & 33.48$\pm$0.11 
        & 40.07$\pm$0.36 
        & 44.73$\pm$0.39 
        & 46.68$\pm$0.43 
        & 34.12
        \\
        {\bf $\gm$ Matching}
        & {\bf 27.19$\pm$0.92}
        & {\bf 31.70$\pm$0.78}
        & {\bf 35.14$\pm$0.19}
        & {\bf 42.04$\pm$0.31}
        & {\bf 45.12$\pm$0.28} 
        & 46.68$\pm$0.43 
        & {\bf 36.24}
        \\
        \bottomrule
    \end{tabular}
}
    \caption
    {
    \footnotesize
    {\bf Image Corruption ( Tiny ImageNet ):} Comparing (Test Accuracy) pruning methods under feature (image) corruption. ResNet-50 is used both as proxy and for downstream classification.
    }
    \label{tab:tinyC}
\end{table*}

\subsection{Corruption Scenarios}
To understand the performance of data pruning strategies in presence of corruption, we experiment with three different sources of corruption -- image corruption, label noise and adversarial attacks. 

\subsubsection{Robustness to Image Corruption} 
In this set of experiments, we investigate the robustness of data pruning strategies when the input images are corrupted -- a popular robustness setting, often encountered when training models on real-world data~\citep{hendrycks2019benchmarking, szegedy2013intriguing}.
To corrupt images, we apply five types of realistic noise: Gaussian noise, random occlusion, resolution reduction, fog, and motion blur to parts of the corrupt samples i.e. to say if $m$ samples are corrupted, each type of noise is added to one a random $m/5$ of them, while the other partitions are corrupted with a different noise. The results are presented in~\cref{tab:cifarC},~\ref{tab:tinyC}. We observe that $\gm$ Matching outperforms all the baselines across all pruning rates improving $\approx $3\% across both datasets on an average. We note that, the gains are more consistent and profound in this setting over the clean setting. Additionally, similar to our prior observations in the clean setting, the gains of $\gm$ Matching are more significant at high pruning rates (i.e. low selection ratio).

\begin{table*}
    \footnotesize
    \centering
    \begin{tabular}{lccccc}
        \toprule
        
        & \multicolumn{2}{c}{\bf CIFAR-100 (Label noise)} 
        & \multicolumn{2}{c}{\bf Tiny ImageNet (Label noise)} 
        & 
        \\
        \cmidrule(r){2-3} 
        \cmidrule(r){4-5} 
        {\bf Method / Ratio }
        & {\bf 20\%} 
        & {\bf 30\%} 
        & {\bf 20\%} 
        & {\bf 30\%} 
        & {\bf Mean $\uparrow$}
        \\
        \midrule
        \multicolumn{6}{c}{\bf 20\% Label Noise}
        \\
        \midrule
        Random 
        & 34.47$\pm$0.64 
        & 43.26$\pm$1.21 
        & 17.78$\pm$0.44 
        & 23.88$\pm$0.42 
        & 29.85
        \\
        Herding 
        & 42.29$\pm$1.75 
        & 50.52$\pm$3.38
        & 18.98$\pm$0.44 
        & 24.23$\pm$0.29 
        &  34.01
        \\
        Forgetting 
        & 36.53$\pm$1.11 
        & 45.78$\pm$1.04 
        & 13.20$\pm$0.38 
        & 21.79$\pm$0.43 
        & 29.33
        \\
        GraNd-score 
        & 31.72$\pm$0.67 
        & 42.80$\pm$0.30 
        & 18.28$\pm$0.32 
        & 23.72$\pm$0.18 
        & 28.05
        \\
        EL2N-score 
        & 29.82$\pm$1.19 
        & 33.62$\pm$2.35 
        & 13.93$\pm$0.69 
        & 18.57$\pm$0.31 
        & 23.99
        \\
        Optimization-based 
        & 32.79$\pm$0.62 
        & 41.80$\pm$1.14
        & 14.77$\pm$0.95 
        & 22.52$\pm$0.77 
        & 27.57
        \\
        Self-sup.-selection 
        & 31.08$\pm$0.78 
        & 41.87$\pm$0.63 
        & 15.10$\pm$0.73 
        & 21.01$\pm$0.36 
        & 27.27
        \\
        Moderate-DS 
        & 40.25$\pm$0.12 
        & 48.53$\pm$1.60 
        & 19.64$\pm$0.40 
        & 24.96$\pm$0.30
        & 31.33
        \\
        {\bf $\gm$ Matching}
        & {\bf 52.64$\pm$0.72}
        & {\bf 61.01$\pm$0.47}
        & {\bf 25.80$\pm$0.37} 
        & {\bf 31.71$\pm$0.24}
        & {\bf 42.79}
        \\
        \midrule
        \multicolumn{6}{c}{\bf 35\% Label Noise}
        \\
        \midrule
        Random 
        & 24.51$\pm$1.34 
        & 32.26$\pm$0.81 
        & 14.64$\pm$0.29 
        & 19.41$\pm$0.45 
        & 22.71
        \\
        Herding 
        & 29.42$\pm$1.54 
        & 37.50$\pm$2.12 
        & 15.14$\pm$0.45 
        & 20.19$\pm$0.45 
        & 25.56
        \\
        Forgetting 
        & 29.48$\pm$1.98 
        & 38.01$\pm$2.21 
        & 11.25$\pm$0.90 
        & 17.07$\pm$0.66 
        & 23.14
        \\
        GraNd-score 
        & 23.03$\pm$1.05 
        & 34.83$\pm$2.01 
        & 13.68$\pm$0.46 
        & 19.51$\pm$0.45 
        & 22.76
        \\
        EL2N-score 
        & 21.95$\pm$1.08 
        & 31.63$\pm$2.84 
        & 10.11$\pm$0.25 
        & 13.69$\pm$0.32 
        & 19.39
        \\
        Optimization-based 
        & 26.77$\pm$0.15 
        & 35.63$\pm$0.92 
        & 12.37$\pm$0.68 
        & 18.52$\pm$0.90 
        & 23.32
        \\
        Self-sup.-selection 
        & 23.12$\pm$1.47 
        & 34.85$\pm$0.68 
        & 11.23$\pm$0.32 
        & 17.76$\pm$0.69 
        & 22.64
        \\
        Moderate-DS 
        & 28.45$\pm$0.53 
        & 36.55$\pm$1.26 
        & 15.27$\pm$0.31 
        & 20.33$\pm$0.28 
        & 25.15
        \\
        {\bf $\gm$ Matching}
        & {\bf 43.33$\pm$ 1.02}
        & {\bf 58.41$\pm$ 0.68}
        & {\bf 23.14$\pm$ 0.92} 
        & {\bf 27.76$\pm$ 0.40}
        & {\bf 38.16}
        \\
        \bottomrule
    \end{tabular}
    \caption{
    \footnotesize
    {\bf Robustness to Label Noise:} Comparing (Test Accuracy) pruning methods on CIFAR-100 and TinyImageNet datasets, under 20\% and 35\% Symmetric Label Corruption, at 20\% and 30\% selection ratio. ResNet-50 is used both as proxy and for downstream classification.}
    \label{tab:label-noise}
\end{table*}
\begin{table*}
\footnotesize
\centering
\resizebox{\textwidth}{!}
{
    \begin{tabular}{lccccccc}
        \midrule
        \rowcolor{light-gray}
        \multicolumn{8}{c}{{\bf Tiny ImageNet (Label Noise)}}
        \\
        \midrule
        {\textbf{Method / Ratio}} 
        & \textbf{20\%} 
        & \textbf{30\%} 
        & \textbf{40\%}
        & \textbf{60\%}
        & \textbf{80\%}
        & \textbf{100\%}
        & \textbf{Mean $\uparrow$}
        \\ 
        \midrule
        Random 
        & 17.78$\pm$0.44 
        & 23.88$\pm$0.42 
        & 27.97$\pm$0.39 
        & 34.88$\pm$0.51 
        & 38.47$\pm$0.40 
        & 44.42$\pm$0.47 
        & 28.60
        \\
        Herding 
        & 18.98$\pm$0.44 
        & 24.23$\pm$0.29 
        & 27.28$\pm$0.31 
        & 34.36$\pm$0.29 
        & 39.00$\pm$0.49 
        & 44.42$\pm$0.47 
        & 28.87
        \\
        Forgetting 
        & 13.20$\pm$0.38 
        & 21.79$\pm$0.43 
        & 27.89$\pm$0.22 
        & \textbf{36.03$\pm$0.24} 
        & \textbf{40.60$\pm$0.31} 
        & 44.42$\pm$0.47 
        & 27.50
        \\
        GraNd-score 
        & 18.28$\pm$0.32 
        & 23.72$\pm$0.18 
        & 27.34$\pm$0.33 
        & 34.91$\pm$0.19 
        & 39.45$\pm$0.45 
        & 44.42$\pm$0.47 
        & 28.34
        \\
        EL2N-score 
        & 13.93$\pm$0.69 
        & 18.57$\pm$0.31 
        & 24.56$\pm$0.34 
        & 32.14$\pm$0.49 
        & 37.64$\pm$0.41 
        & 44.42$\pm$0.47 
        & 25.37
        \\
        Optimization-based 
        & 14.77$\pm$0.95 
        & 22.52$\pm$0.77 
        & 25.62$\pm$0.90 
        & 34.18$\pm$0.79 
        & 38.49$\pm$0.69 
        & 44.42$\pm$0.47 
        & 27.12
        \\
        Self-sup.-selection 
        & 15.10$\pm$0.73 
        & 21.01$\pm$0.36 
        & 26.62$\pm$0.22 
        & 33.93$\pm$0.36 
        & 39.22$\pm$0.12 
        & 44.42$\pm$0.47 
        & 27.18
        \\
        Moderate-DS 
        & \textbf{19.64$\pm$0.40} 
        & \textbf{24.96$\pm$0.30} 
        & \textbf{29.56$\pm$0.21} 
        & 35.79$\pm$0.36 
        & 39.93$\pm$0.23 
        & 44.42$\pm$0.47 
        & 30.18
        \\
        {\bf $\gm$ Matching}
        & {\bf 25.80$\pm$0.37} 
        & {\bf 31.71$\pm$0.24}
        & {\bf 34.87$\pm$0.21}
        & {\bf 39.76$\pm$0.71}
        & {\bf 41.94$\pm$0.23}
        & 44.42$\pm$0.47
        & {\bf 34.82}
        \\
        \bottomrule
    \end{tabular} 
}
    \caption
    {
    \footnotesize
    {\bf Pruning with Label Noise (TinyImageNet):} Comparing (Test Accuracy) pruning methods under 20\% Symmetric Label Corruption across wide array of selection ratio. ResNet-50 is used both as proxy and for downstream classification.
    } 
    \label{tab:tinyN} 
\end{table*}

\subsubsection{Robustness to Label Corruption}
Next, we consider another important corruption scenario where a fraction of the training examples are mislabeled. Since acquiring manually labeled data is often impractical and data in the wild always contains noisy annotations -- it is important to ensure the pruning method is robust to label noise. We conduct experiments with synthetically injected symmetric label noise~\citep{li2022selective, patrini2017making, xia2020robust}. The results are summarized in~\cref{tab:label-noise},\ref{tab:tinyN}. Encouragingly, $\gm$ Matching {\bf outperforms the baselines by $\approx$ 12\%}. Since, mislabeled samples come from different class - they are spatially quite dissimilar than the correctly labeled ones thus, less likely to be picked by $\gm$ matching, explaining the superior performance.

\begin{table*}
    \footnotesize
    \centering
    \begin{tabular}{lccccc}
        \toprule
        & \multicolumn{2}{c}{\bf CIFAR-100 (PGD Attack)} 
        & \multicolumn{2}{c}{\bf CIFAR-100 (GS Attack)} 
        & \\
        \cmidrule(r){2-3} 
        \cmidrule(r){4-5} 
        {\bf Method / Ratio } 
        & 20\% 
        & 30\% 
        & 20\% 
        & 30\% 
        & {\bf Mean $\uparrow$} 
        \\
        \midrule
        Random 
        & 43.23$\pm$0.31 
        & 52.86$\pm$0.34 
        & 44.23$\pm$0.41 
        & 53.44$\pm$0.44 
        & 48.44
        \\
        Herding 
        & 40.21$\pm$0.72 
        & 49.62$\pm$0.65 
        & 39.92$\pm$1.03 
        & 50.14$\pm$0.15 
        & 44.97
        \\
        Forgetting 
        & 35.90$\pm$1.30 
        & 47.37$\pm$0.99 
        & 37.55$\pm$0.53 
        & 46.88$\pm$1.91 
        & 41.93
        \\
        GraNd-score 
        & 40.87$\pm$0.84 
        & 50.13$\pm$0.30 
        & 40.77$\pm$1.11 
        & 49.88$\pm$0.83 
        & 45.41
        \\
        EL2N-score 
        & 26.61$\pm$0.58 
        & 34.50$\pm$1.02 
        & 26.72$\pm$0.66 
        & 35.55$\pm$1.30 
        & 30.85
        \\
        Optimization-based 
        & 38.29$\pm$1.77 
        & 46.25$\pm$1.82 
        & 41.36$\pm$0.92 
        & 49.10$\pm$0.81 
        & 43.75
        \\
        Self-sup.-selection 
        & 40.53$\pm$1.15 
        & 49.95$\pm$0.50 
        & 40.74$\pm$1.66 
        & 51.23$\pm$0.25 
        & 45.61
        \\
        Moderate-DS 
        & 43.60$\pm$0.97 
        & 51.66$\pm$0.39 
        & 44.69$\pm$0.68 
        & 53.71$\pm$0.37 
        & 48.42
        \\
        {\bf $\gm$ Matching}
        & {\bf 45.41 $\pm$0.86}
        & {\bf 51.80 $\pm$1.01}
        & {\bf 49.78 $\pm$0.27}
        & {\bf 55.50 $\pm$0.31}
        & {\bf 50.62}
        \\
        \toprule
        & \multicolumn{2}{c}{\bf Tiny ImageNet (PGD Attack)} 
        & \multicolumn{2}{c}{\bf Tiny ImageNet (GS Attack)} 
        & \\
        \cmidrule(r){2-3} 
        \cmidrule(r){4-5} 
        {\bf Method / Ratio } 
        & 20\% 
        & 30\% 
        & 20\% 
        & 30\% 
        & {\bf Mean $\uparrow$} 
        \\
        \midrule
        Random 
        & 20.93$\pm$0.30 
        & 26.60$\pm$0.98 
        & 22.43$\pm$0.31 
        & 26.89$\pm$0.31 
        & 24.21
        \\
        Herding 
        & 21.61$\pm$0.36 
        & 25.95$\pm$0.19 
        & 23.04$\pm$0.28 
        & 27.39$\pm$0.14 
        & 24.50
        \\
        Forgetting 
        & 20.38$\pm$0.47 
        & 26.12$\pm$0.19 
        & 22.06$\pm$0.31 
        & 27.21$\pm$0.21 
        & 23.94
        \\
        GraNd-score 
        & 20.76$\pm$0.21 
        & 26.34$\pm$0.32 
        & 22.56$\pm$0.30 
        & 27.52$\pm$0.40 
        & 24.30
        \\
        EL2N-score 
        & 16.67$\pm$0.62 
        & 22.36$\pm$0.42 
        & 19.93$\pm$0.57 
        & 24.65$\pm$0.32 
        & 20.93
        \\
        Optimization-based 
        & 19.26$\pm$0.77 
        & 24.55$\pm$0.92 
        & 21.26$\pm$0.24 
        & 25.88$\pm$0.37 
        & 22.74
        \\
        Self-sup.-selection 
        & 19.23$\pm$0.46 
        & 23.92$\pm$0.51 
        & 19.70$\pm$0.20 
        & 24.73$\pm$0.39 
        & 21.90
        \\
        Moderate-DS 
        & 21.81$\pm$0.37 
        & 27.11$\pm$0.20 
        & 23.20$\pm$0.13 
        & 28.89$\pm$0.27 
        & 25.25
        \\
        {\bf $\gm$ Matching}
        & {\bf 25.98 $\pm$1.12}
        & {\bf 30.77 $\pm$0.25}
        & {\bf 29.71 $\pm$0.45}
        & {\bf 32.88 $\pm$0.73}
        & {\bf 29.84}
        \\
        \bottomrule
    \end{tabular}
    \caption{
    \footnotesize
    {\bf Robustness to Adversarial Attacks}. Comparing (Test Accuracy) pruning methods under PGD and GS attacks. ResNet-50 is used both as proxy and for downstream classification.}
    \label{tab:Adv-A}
\end{table*}

\subsubsection{Robustness to Adversarial Attacks}

Finally, we investigate the robustness properties of data pruning strategies in presence of adversarial attacks that add imperceptible but adversarial noise on natural examples~\citep{szegedy2013intriguing, ma2018characterizing, huang2010active}. Specifically, we experiment with two popular adversarial attack algorithms -- PGD attack~\citep{madry2017towards} and GS Attacks~\citep{goodfellow2014explaining}. We employ adversarial attacks on models trained with CIFAR-100 and Tiny-ImageNet to generate adversarial examples. Following this, various methods are applied to these adversarial examples, and the models are retrained on the curated subset of data. The results are summarized in~\cref{tab:Adv-A}. Similar to other corruption scenarios, even in this setting, $\gm$ Matching outperforms the baselines $\approx$  and remains effective across different pruning rates yielding $\approx$ 3\% average gain over the best performing baseline.

\begin{table*}
    \footnotesize
    \centering
    \begin{tabular}{lccccc}
        \toprule
        
        & \multicolumn{2}{c}{\bf ResNet-50→SENet}
        & \multicolumn{2}{c}{\bf ResNet-50→EfficientNet-B0} 
        & \\
        \cmidrule(r){2-3} 
        \cmidrule(r){4-5} 
        {\bf Method / Ratio } 
        & {\bf 20\%} 
        & {\bf 30\%} 
        & {\bf 20\%} 
        & {\bf 30\%} 
        & {\bf Mean $\uparrow$} 
        \\
        \midrule
        Random 
        & 34.13$\pm$0.71 
        & 39.57$\pm$0.53 
        & 32.88$\pm$1.52 
        & 39.11$\pm$0.94 
        & 36.42 
        \\ 
        Herding 
        & 34.86$\pm$0.55 
        & 38.60$\pm$0.68 
        & 32.21$\pm$1.54 
        & 37.53$\pm$0.22 
        & 35.80
        \\
        Forgetting 
        & 33.40$\pm$0.64 
        & 39.79$\pm$0.78 
        & 31.12$\pm$0.21 
        & 38.38$\pm$0.65 
        &  35.67
        \\
        GraNd-score 
        & 35.12$\pm$0.54 
        & 41.14$\pm$0.42 
        & 33.20$\pm$0.67 
        & 40.02$\pm$0.35
        & 37.37
        \\
        EL2N-score 
        & 31.08$\pm$1.11 
        & 38.26$\pm$0.45 
        & 31.34$\pm$0.49 
        & 36.88$\pm$0.32 
        & 34.39
        \\
        Optimization-based 
        & 33.18$\pm$0.52 
        & 39.42$\pm$0.77 
        & 32.16$\pm$0.90 
        & 38.52$\pm$0.50 
        & 35.82
        \\
        Self-sup.-selection 
        & 31.74$\pm$0.71 
        & 38.45$\pm$0.39 
        & 30.99$\pm$1.03 
        & 37.96$\pm$0.77 
        & 34.79
        \\
        Moderate-DS 
        & 36.04$\pm$0.15
        & 41.40$\pm$0.20 
        & 34.26$\pm$0.48 
        & 39.57$\pm$0.29 
        & 37.82
        \\
        {\bf $\gm$ Matching} 
        & {\bf 37.93$\pm$0.23} 
        & {\bf 42.59$\pm$0.29}
        & {\bf 36.31$\pm$0.67}
        & {\bf 41.03$\pm$0.41} 
        & {\bf 39.47}
        \\
        \bottomrule
    \end{tabular}
    \caption
    {
    \footnotesize
    {\bf Network Transfer (Clean)} : Tiny-ImageNet Model Transfer Results. A ResNet-50 proxy is used to find important samples which are then used to train SENet and EfficientNet.
    }
    \label{tab:clean-nw-transfer}
\end{table*}

\begin{table*}
    \footnotesize
    \centering
    \begin{tabular}{lccccc}
        \toprule
        & \multicolumn{2}{c}{\bf ResNet-50→ VGG-16} 
        & \multicolumn{2}{c}{\bf ResNet-50→ ShuffleNet} 
        & \\
        \cmidrule(r){2-3} 
        \cmidrule(r){4-5} 
        {\bf Method / Ratio } 
        & 20\% 
        & 30\% 
        & 20\% 
        & 30\% 
        & {\bf Mean $\uparrow$} 
        \\
        \midrule
        \multicolumn{6}{c}{\bf No Corruption} 
        \\
        \midrule 
        Random 
        & 29.63$\pm$0.43 
        & 35.38$\pm$0.83 
        & 32.40$\pm$1.06 
        & 39.13$\pm$0.81 
        & 34.96
        \\
        Herding 
        & 31.05$\pm$0.22 
        & 36.27$\pm$0.57 
        & 33.10$\pm$0.39 
        & 38.65$\pm$0.22 
        & 35.06
        \\
        Forgetting 
        & 27.53$\pm$0.36 
        & 35.61$\pm$0.39 
        & 27.82$\pm$0.56 
        & 36.26$\pm$0.51 
        & 32.35
        \\
        GraNd-score 
        & 29.93$\pm$0.95 
        & 35.61$\pm$0.39 
        & 29.56$\pm$0.46 
        & 37.40$\pm$0.38 
        & 33.34
        \\
        EL2N-score 
        & 26.47$\pm$0.31 
        & 33.19$\pm$0.51 
        & 28.18$\pm$0.27 
        & 35.81$\pm$0.29 
        & 31.13
        \\
        Optimization-based 
        & 25.92$\pm$0.64 
        & 34.82$\pm$1.29 
        & 31.37$\pm$1.14 
        & 38.22$\pm$0.78 
        & 32.55
        \\
        Self-sup.-selection 
        & 25.16$\pm$1.10 
        & 33.30$\pm$0.94 
        & 29.47$\pm$0.56 
        & 36.68$\pm$0.36 
        & 31.45
        \\
        Moderate-DS 
        & 31.45$\pm$0.32 
        & 37.89$\pm$0.36 
        & 33.32$\pm$0.41 
        & 39.68$\pm$0.34
        & 35.62
        \\
        {\bf $\gm$ Matching}
        & {\bf 35.86$\pm$0.41}
        & {\bf 40.56$\pm$0.22}
        & {\bf 35.51$\pm$0.32}
        & {\bf 40.30$\pm$0.58}
        & {\bf 38.47}
        \\
        \midrule
        \multicolumn{6}{c}{\bf 20\% Label Corruption} 
        \\
        \midrule 
        Random 
        & 23.29$\pm$1.12 
        & 28.18$\pm$1.84 
        & 25.08$\pm$1.32 
        & 31.44$\pm$1.21 
        & 27.00
        \\
        Herding 
        & 23.99$\pm$0.36
        & 28.57$\pm$0.40
        & 26.25$\pm$0.47 
        & 30.73$\pm$0.28
        & 27.39
        \\
        Forgetting 
        & 14.52$\pm$0.66 
        & 21.75$\pm$0.23 
        & 15.70$\pm$0.29 
        & 22.31$\pm$0.35 
        & 18.57
        \\
        GraNd-score 
        & 22.44$\pm$0.46 
        & 27.95$\pm$0.29 
        & 23.64$\pm$0.10 
        & 30.85$\pm$0.21 
        & 26.22
        \\
        EL2N-score 
        & 15.15$\pm$1.25 
        & 23.36$\pm$0.30 
        & 18.01$\pm$0.44 
        & 24.68$\pm$0.34 
        & 20.30
        \\
        Optimization-based 
        & 22.93$\pm$0.58 
        & 24.92$\pm$2.50 
        & 25.82$\pm$1.70 
        & 30.19$\pm$0.48 
        & 25.97
        \\
        Self-sup.-selection 
        & 18.39$\pm$1.30 
        & 25.77$\pm$0.87 
        & 22.87$\pm$0.54 
        & 29.80$\pm$0.36 
        & 24.21
        \\
        Moderate-DS 
        & 23.68$\pm$0.19 
        & 28.93$\pm$0.19 
        & 28.82$\pm$0.33 
        & 32.39$\pm$0.21 
        & 28.46
        \\
        {\bf $\gm$ Matching}
        & {\bf 28.77$\pm$0.77}
        & {\bf 34.87$\pm$0.23}
        & {\bf 32.05$\pm$0.93}
        & {\bf 37.43$\pm$0.25}
        & {\bf 33.28}
        \\
        \midrule
        \multicolumn{6}{c}{\bf 20\% Feature Corruption} 
        \\
        \midrule
        Random 
        & 26.33$\pm$0.88 
        & 31.57$\pm$1.31 
        & 29.15$\pm$0.83 
        & 34.72$\pm$1.00 
        & 30.44
        \\
        Herding 
        & 18.03$\pm$0.33 
        & 25.77$\pm$0.34 
        & 23.33$\pm$0.43 
        & 31.73$\pm$0.38 
        & 24.72
        \\
        Forgetting 
        & 19.41$\pm$0.57 
        & 28.35$\pm$0.16 
        & 18.44$\pm$0.57 
        & 31.09$\pm$0.61 
        & 24.32
        \\
        GraNd-score 
        & 23.59$\pm$0.19 
        & 30.69$\pm$0.13 
        & 23.15$\pm$0.56 
        & 31.58$\pm$0.95 
        & 27.25
        \\
        EL2N-score 
        & 24.60$\pm$0.81 
        & 31.49$\pm$0.33 
        & 26.62$\pm$0.34 
        & 33.91$\pm$0.56 
        & 29.16
        \\
        Optimization-based 
        & 25.12$\pm$0.34 
        & 30.52$\pm$0.89 
        & 28.87$\pm$1.25 
        & 34.08$\pm$1.92 
        & 29.65
        \\
        Self-sup.-selection 
        & 26.33$\pm$0.21 
        & 33.23$\pm$0.26 
        & 26.48$\pm$0.37 
        & 33.54$\pm$0.46 
        & 29.90
        \\
        Moderate-DS 
        & 29.65$\pm$0.68 
        & 35.89$\pm$0.53 
        & 32.30$\pm$0.38
        & 38.66$\pm$0.29 
        & 34.13
        \\
        $\gm$ Matching
        & {\bf 33.45$\pm$1.02}
        & {\bf 39.46$\pm$0.44}
        & {\bf 35.14$\pm$0.21}
        & {\bf 39.89$\pm$0.98}
        & {\bf 36.99}
        \\
        \midrule
        \multicolumn{6}{c}{\bf PGD Attack} 
        \\
        \midrule
        Random 
        & 26.12$\pm$1.09 
        & 31.98$\pm$0.78 
        & 28.28$\pm$0.90 
        & 34.59$\pm$1.18 
        & 30.24
        \\
        Herding 
        & 26.76$\pm$0.59 
        & 32.56$\pm$0.35 
        & 28.87$\pm$0.48 
        & 35.43$\pm$0.22 
        & 30.91
        \\
        Forgetting 
        & 24.55$\pm$0.57 
        & 31.83$\pm$0.36 
        & 23.32$\pm$0.37 
        & 31.82$\pm$0.15 
        & 27.88
        \\
        GraNd-score 
        & 25.19$\pm$0.33 
        & 31.46$\pm$0.54 
        & 26.03$\pm$0.66 
        & 33.22$\pm$0.24 
        & 28.98
        \\
        EL2N-score 
        & 21.73$\pm$0.47 
        & 27.66$\pm$0.32 
        & 22.66$\pm$0.35 
        & 29.89$\pm$0.64 
        & 25.49
        \\
        Optimization-based 
        & 26.02$\pm$0.36 
        & 31.64$\pm$1.75 
        & 27.93$\pm$0.47 
        & 34.82$\pm$0.96 
        & 30.10
        \\
        Self-sup.-selection 
        & 22.36$\pm$0.30 
        & 28.56$\pm$0.50 
        & 25.35$\pm$0.27 
        & 32.57$\pm$0.13 
        & 27.21
        \\
        Moderate-DS 
        & 27.24$\pm$0.36 
        & 32.90$\pm$0.31 
        & 29.06$\pm$0.28 
        & 35.89$\pm$0.53 
        & 31.27
        \\
        {\bf $\gm$ Matching}
        & {\bf 27.96$\pm$1.60}
        & {\bf 35.76$\pm$0.82}
        & {\bf 34.11$\pm$0.65}
        & {\bf 40.91$\pm$0.84}
        & {\bf 34.69}
        \\
        \bottomrule
    \end{tabular}
    \caption{
    \footnotesize
    {\bf Network Transfer} : A ResNet-50 proxy (pretrained on TinyImageNet) is used to find important samples from Tiny-ImageNet; which is then used to train a VGGNet-16 and ShuffleNet. We repeat the experiment across multiple corruption settings - clean; 20\% Feature / Label Corruption and PGD attack when 20\% and 30\% samples are selected.}
    \label{tab:VGG-Shuffle}
\end{table*}

\subsection{Generalization to Unseen Network / Domain}
\label{sec:proxy_ablations}
One crucial component of data pruning is the proxy network. Since, the input features (e.g. images) often reside on a non-separable manifold, data pruning strategies rely on a proxy model to map the samples into a separable manifold (embedding space), wherein the data pruning strategies can now assign importance scores. However, it is important for the data pruning strategies to be robust to architecture changes i.e. to say that samples selected via a proxy network should generalize well when trained on unseen (during sample selection) networks / domains. We perform experiments on two such scenarios: 

{\bf A. Network Transfer:} In this setting, the proxy model is trained on the target dataset (no distribution shift). However, the proxy architecture is different than the downstream network. In~\cref{tab:clean-nw-transfer}, we use a ResNet-50 proxy trained on MiniImageNet to sample the data. However, then we train a downstram SENet and EfficientNet-B0 on the sampled data. In~\cref{tab:VGG-Shuffle}, we use a ResNet-50 pretrained on Mini ImageNet as proxy, whereas we train a VGG-16 and ShuffleNet over the selected samples.

{\bf B. Domain Transfer:} Next, we consider the setting where the proxy shares the same architecture with the downstream model. However, the proxy used to select the samples is pretrained on a different dataset (distribution shift) than target dataset. In~\cref{fig:main-cifar-vanilla} we use a proxy ResNet-18 pretrained on ImageNet to select samples from CIFAT-10. The selected samples are used to train a subsequent ResNet-18. In~\cref{fig:main-cifar-lp}, we additionally freeze the pretrained encoder i.e. we use ResNet-18 encoder pretrained on ImageNet as proxy. Further, we freeze the encoder and train a downstream linear classifier on top over CIFAR-10 (linear probe). 

\section{Complete Proofs}
In this section we will describe the proof techniques involved in establishing the theoretical claims in the paper. 

\subsubsection{Intermediate Lemmas}
% We restate~\cref{lemma:gm_mean_bound}for convenience:

In order to prove~\cref{th:gm_match_cvg}, we will first establish the following result which follows from the definition of $\gm$; see also~\citep{lopuhaa1991breakdown,minsker2015geometric,cohen2016geometric,chen2017distributed,li2019rsa,wu2020federated, pmlr-v151-acharya22a} for similar adaptations.

\begin{lemma}
    Given a set of $\alpha$-corrupted samples $\gD = \gD_\gG \cup \gD_\gB$ (~\cref{def:corruption_model}), and an $\epsilon$-approx. $\gm(\cdot)$ oracle~\eqref{eq:gm}, then we have: 
    \begin{equation}
        \E\bigg\| \bm{\mu}^\gm - \bm{\mu}^\gG\bigg \|^2\leq 
        \frac{8|\gD_\gG|}{(|\gD_\gG|-|\gD_\gB|)^2}\sum_{\vx \in \gD_\gG}\E\bigg\|\vx -\bm{\mu}^\gG \bigg\|^2 +\frac{2\epsilon^2}{(|\gD_\gG|-|\gD_\gB|)^2}
        \label{eq:gm_mean_bound}
     \end{equation}
     where, $\bm{\mu}^\gm = \gm(\{\vx_i \in \gD\})$ is the $\epsilon$-approximate $\gm$ over the entire ($\alpha$-corrupted) dataset; and $\bm{\mu}^\gG = \frac{1}{|\gD_\gG|}\sum_{\vx_i \in \gD_\gG}\vx_i$ denotes the mean of the (underlying) uncorrupted set. 
\label{lemma:gm_mean_bound}
\end{lemma}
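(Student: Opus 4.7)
The strategy is a standard two-sided comparison of the aggregate cost $\rho(z) := \sum_{\vx \in \gD} \|z - \vx\|$ evaluated at the competitor $\bm{\mu}^\gG$ versus at the $\epsilon$-approximate minimizer $\bm{\mu}^\gm$, with the split $\gD = \gD_\gG \cup \gD_\gB$ exploited on both sides. Write $r := \|\bm{\mu}^\gm - \bm{\mu}^\gG\|$; the objective is to extract from $\rho$ a linear-in-$r$ lower bound with coefficient $|\gD_\gG| - |\gD_\gB|$, which is precisely where the breakdown condition $|\gD_\gB| < |\gD_\gG|$ (equivalently $\psi < 1/2$ in Definition~\ref{def:corruption_model}) enters.

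First I would apply the reverse triangle inequality pointwise: for every $\vx \in \gD_\gG$, $\|\bm{\mu}^\gm - \vx\| \geq r - \|\vx - \bm{\mu}^\gG\|$, and for every $\vx \in \gD_\gB$, $\|\bm{\mu}^\gm - \vx\| \geq \|\vx - \bm{\mu}^\gG\| - r$. Summing gives
\[
\rho(\bm{\mu}^\gm) \;\geq\; (|\gD_\gG| - |\gD_\gB|)\, r \;-\; \sum_{\vx \in \gD_\gG} \|\vx - \bm{\mu}^\gG\| \;+\; \sum_{\vx \in \gD_\gB} \|\vx - \bm{\mu}^\gG\|.
\]
Combining this with the $\epsilon$-approximation guarantee $\rho(\bm{\mu}^\gm) \leq \rho(\bm{\mu}^\gG) + \epsilon$ from \eqref{eq:gm_epsilon}, the two sums over $\gD_\gB$ cancel exactly, leaving the clean linear inequality
\[
(|\gD_\gG| - |\gD_\gB|)\, r \;\leq\; 2 \sum_{\vx \in \gD_\gG} \|\vx - \bm{\mu}^\gG\| \;+\; \epsilon.
\]
From here it is mechanical: squaring both sides with $(a+b)^2 \leq 2(a^2 + b^2)$ produces the factor $8$ in the variance term (a factor of $2$ from the squaring inequality times a factor of $4$ from the coefficient $2$ inside the bracket), and Cauchy--Schwarz $\bigl(\sum_{\vx \in \gD_\gG} \|\vx - \bm{\mu}^\gG\|\bigr)^2 \leq |\gD_\gG| \sum_{\vx \in \gD_\gG} \|\vx - \bm{\mu}^\gG\|^2$ converts the $L^1$-sum of deviations into the $L^2$-sum that appears in \eqref{eq:gm_mean_bound}. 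Dividing by $(|\gD_\gG| - |\gD_\gB|)^2$ and taking expectations (the good samples are random, while the bad samples are adversarial but arbitrary conditional on them) yields the stated bound.

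The main obstacle is purely conceptual: a naive argument would be overwhelmed by the adversarial quantity $\sum_{\vx \in \gD_\gB} \|\vx - \bm{\mu}^\gG\|$, which is unbounded under the gross-corruption model. The crucial observation is that the same quantity appears on \emph{both} sides of $\rho(\bm{\mu}^\gm) \leq \rho(\bm{\mu}^\gG) + \epsilon$ and cancels exactly, leaving only the coefficient $|\gD_\gG| - |\gD_\gB|$ multiplying $r$. This coefficient is strictly positive precisely when $\psi < 1/2$, and this cancellation is the mechanism behind the optimal $1/2$ breakdown point of the geometric median; any loss of this cancellation (e.g.\ from treating $\epsilon$ multiplicatively and not absorbing $\epsilon \sum_{\vx \in \gD_\gB}\|\vx-\bm{\mu}^\gG\|$) would immediately destroy robustness, so care in handling the $\epsilon$-accuracy is the only delicate point in the proof.
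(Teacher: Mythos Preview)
Your proposal is correct and follows essentially the same route as the paper: lower-bound $\rho(\bm{\mu}^\gm)$ via the reverse triangle inequality split across $\gD_\gG$ and $\gD_\gB$, upper-bound it via the $\epsilon$-approximation guarantee, cancel the adversarial $\gD_\gB$-sum, then square and apply Cauchy--Schwarz. The only cosmetic difference is that the paper first bounds $\|\bm{\mu}^\gm\|$ (i.e.\ runs the argument centered at the origin with competitor $\vz=0$) and invokes translation equivariance of the geometric median at the very end to recenter at $\bm{\mu}^\gG$, whereas you center at $\bm{\mu}^\gG$ from the outset and use it directly as the competitor --- a slightly more direct presentation of the same inequality.
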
 

\begin{proof}
Note that, by using triangle inequality, we can write:
\begin{flalign}
    \sum_{\vx_i \in \gD} \bigg\|\bm{\mu}^\gm - \vx_i\bigg\| 
    \nonumber
    & \geq \sum_{\vx_i \in \gD_\gB} \bigg( \bigg\|\vx_i\bigg\| - \bigg\|\bm{\mu}^\gm\bigg\|\bigg) +
    \sum_{\vx_i \in \gD_\gG} \bigg( \bigg\|\bm{\mu}^\gm\bigg\| - \bigg\|\vx_i\bigg\|\bigg)\\
    \nonumber
    & = \bigg( \sum_{\vx_i \in \gD_\gG} - \sum_{\vx_i \in \gD_\gB} \bigg) \bigg\|\bm{\mu}^\gm\bigg\| +
    \sum_{\vx_i \in \gD_\gB} \bigg\|\vx_i\bigg\| - \sum_{\vx_i \in \gD_\gG} \bigg\|\vx_i\bigg\|\\
    & = \bigg( |\gD_\gG| - |\gD_\gB|\bigg) \bigg\|\bm{\mu}^\gm\bigg\| +
    \sum_{\vx_i \in \gD} \bigg\|\vx_i\bigg\| - 2\sum_{\vx_i \in \gD_\gG} \bigg\|\vx_i\bigg\|.
\end{flalign}
    
Now, by definition~\eqref{eq:gm_epsilon}; we have that:
\begin{equation}
    \sum_{\vx_i \in \gD} \bigg\|\bm{\mu}^\gm - \vx_i\bigg\| 
    \leq \inf_{\vz \in \gH} \sum_{\vx_i \in \gD} \bigg\| \vz - \vx_i\bigg\| + \epsilon
    \leq \sum_{\vx_i \in \gD} \bigg\|\vx_i\bigg\| + \epsilon
\end{equation}

Combining these two inequalities, we get:
\begin{flalign}
    \bigg( |\gD_\gG| - |\gD_\gB|\bigg) \bigg\|\bm{\mu}^\gm\bigg\|
    \leq \sum_{\vx_i \in \gD} \bigg\|\vx_i\bigg\| - \sum_{\vx_i \in \gD} \bigg\|\vx_i\bigg\| + 2 \sum_{\vx_i \in \gD_\gG} \bigg\|\vx_i\bigg\| + \epsilon
\end{flalign}
This implies: 
\begin{flalign}
    \bigg\|\bm{\mu}^\gm\bigg\| \leq \frac{2 }{\bigg(|\gD_\gG| - |\gD_\gB|\bigg)}\sum_{\vx_i \in \gD_\gG} \bigg\|\vx_i\bigg\| + \frac{\epsilon}{\bigg( |\gD_\gG| - |\gD_\gB|\bigg)}
\end{flalign}
Squaring both sides, 
\begin{flalign}
    \bigg\|\bm{\mu}^\gm\bigg\|^2 
    &\leq \Bigg[\frac{2}{\bigg(|\gD_\gG| - |\gD_\gB|\bigg)}\sum_{\vx_i \in \gD_\gG} \bigg\|\vx_i\bigg\| + \frac{\epsilon}{\bigg( |\gD_\gG| - |\gD_\gB|\bigg)}\Bigg]^2\\
    &\leq 2\Bigg[\frac{2}{\bigg(|\gD_\gG| - |\gD_\gB|\bigg)}\sum_{\vx_i \in \gD_\gG} \bigg\|\vx_i\bigg\|\Bigg]^2 + 2 \Bigg[ \frac{\epsilon}{\bigg( |\gD_\gG| - |\gD_\gB|\bigg)}\Bigg]^2
\end{flalign}
Where, the last step is a well-known consequence of triangle inequality and AM-GM inequality. 
Taking expectation on  both sides, we have:
\begin{flalign}
    \E\bigg\|\bm{\mu}^\gm\bigg\|^2 
    &\leq \frac{8}{\bigg(|\gD_\gG| - |\gD_\gB|\bigg)^2}\sum_{\vx_i \in \gD_\gG} \E\bigg\|\vx_i\bigg\|^2 + \frac{2\epsilon^2}{\bigg( |\gD_\gG| - |\gD_\gB|\bigg)^2}
\end{flalign}
Since, $\gm$ is {\bf translation equivariant}, we can write:
\begin{flalign}
    \E\bigg[\gm\bigg(\bigg\{\vx_i - \bm{\mu}^\gG | \vx_i \in \gD\bigg\}\bigg)\bigg]
    = \E\bigg[\gm\bigg(\bigg\{\vx_i | \vx_i \in \gD \bigg\}\bigg) - \bm{\mu}^\gG \bigg]
\end{flalign}
Consequently we have that :
\begin{flalign}
    \nonumber
    \E\bigg\|\bm{\mu}^\gm - \bm{\mu}^\gG \bigg\|^2
    &\leq \frac{8}{\bigg(|\gD_\gG| - |\gD_\gB|\bigg)^2}\sum_{\vx_i \in \gD_\gG} \E\bigg\|\vx_i - \bm{\mu}^\gG \bigg\|^2 + \frac{2\epsilon^2}{\bigg( |\gD_\gG| - |\gD_\gB|\bigg)^2}
\end{flalign}
This concludes the proof.
\end{proof}

\subsubsection{Proof of Theorem~\ref{th:gm_match_cvg}}
We restate the theorem for convenience:

{\bf Theorem}~\ref{th:gm_match_cvg} 
Suppose that, we are given, a set of $\alpha$-corrupted samples $\gD$ (\cref{def:corruption_model}), pretrained proxy model $\phi_\mB$, and an $\epsilon$ approx. $\gm(\cdot)$ oracle~\eqref{eq:gm}. Then, $\gmm$ guarantees that the mean of the selected $k$-subset $\bm{\mu}^\gS = \frac{1}{k} \sum_{\vx_i \in \gD_\gS}\vx_i$ converges to the neighborhood of $\bm{\mu}^\gG = \E_{\vx \in \gD_\gG}(\vx)$ at the rate $\gO(\frac{1}{k})$ such that:   
\begin{equation}
    \E\bigg\|\bm{\mu}^\gS - \bm{\mu}^\gG\bigg \|^2\leq 
    \frac{8|\gD_\gG|}{(|\gD_\gG|-|\gD_\gB|)^2}\sum_{\vx \in \gD_\gG}\E\bigg\|\vx -\Mu^\gG \bigg\|^2 +\frac{2\epsilon^2}{(|\gD_\gG|-|\gD_\gB|)^2}
\end{equation}
        
\begin{proof}    

To prove this result, we first show that $\gmm$ converges to $\Mu_\epsilon^\gm$ at $\gO(\frac{1}{k})$. It suffices to show that the error $\delta = \bigg\| \Mu^\gm_\epsilon - \frac{1}{k} \sum_{\vx_i \in \gS}\vx_i\bigg\| \rightarrow 0$ asymptotically. We will follow the proof technique in~\citep{chen2010super} mutatis mutandis to prove this result. 
We also assume that $\gD$ contains the support of the resulting noisy distribution.

We start by defining a $\gm$-centered marginal polytope as the convex hull -- 
\begin{flalign}
    \gM_\epsilon := \text{conv}\bigg\{\vx -\Mu^\gm_\epsilon \;| \vx \in \gD\bigg\}
\end{flalign}

% and thus by invoking the {\bf translation equivariance} property of $\gm$ we get that it also resides in the relative interior of $\gM_\epsilon$.
Then, we can rewrite the update equation~\eqref{eq:theta_update} as: 
\begin{flalign}
    \vtheta_{t+1} 
    &= \vtheta_t + \Mu_\epsilon^\gm - \vx_{t+1}\\
    &= \vtheta_t - (\vx_{t+1} - \Mu_\epsilon^\gm)\\
    &= \vtheta_t - \bigg(\argmax_{\vx \in \gD} \langle \vtheta_t, \vx \rangle - \Mu_\epsilon^\gm\bigg)\\
    &= \vtheta_t - \argmax_{\vm \in \gM_\epsilon} \langle \vtheta_t, \vm \rangle\\
    &= \vtheta_t - \vm_t
\end{flalign}

Now, squaring both sides we get :

\begin{flalign}
    \|\vtheta_{t+1}\|^2 = \|\theta_t\|^2 + \|\vm_t\|^2 - 2 \langle \vtheta_t, \vm_t \rangle
\end{flalign}

rearranging the terms we get:
\begin{flalign}
    \|\vtheta_{t+1}\|^2 - \|\theta_t\|^2 
    &= \|\vm_t\|^2 - 2 \langle \vtheta_t, \vm_t \rangle \\
    &= \|\vm_t\|^2 - 2 \|\vm_t\|\|\vtheta_t\|\langle \frac{\vtheta_t}{\|\vtheta_t\|}, \frac{\vm_t}{\|\vm_t\|} \rangle\\
    &= 2\|\vm_t\|\bigg( \frac{1}{2}\|\vm_t\| - \|\vtheta_t\|\langle \frac{\vtheta_t}{\|\vtheta_t\|}, \frac{\vm_t}{\|\vm_t\|} \rangle \bigg)
\end{flalign}

Assume that $\|\vx_i\| \leq r \;  \forall \vx_i \in \gD$. , Then we note that, 
\begin{flalign*}
    \|\vx_i - \Mu_\epsilon^\gm\| 
    &\leq \|\vx_i\| + \|\Mu_\epsilon^\gm\| \leq 2r
\end{flalign*}
Plugging this in, we get:
\begin{flalign}
    \|\vtheta_{t+1}\|^2 - \|\theta_t\|^2 
    &\leq 2\|\vm_t\| \bigg( r - \|\vtheta_t\|\langle \frac{\vtheta_t}{\|\vtheta_t\|}, \frac{\vm_t}{\|\vm_t\|} \rangle \bigg)
\end{flalign}

Recall that, $\Mu_\epsilon^\gm$ is guaranteed to be in the relative interior of 
$\text{conv}\{\vx \;| \vx \in \gD\}$~\citep{lopuhaa1991breakdown, minsker2015geometric}. Consequently, $\exists \kappa$-ball around $\Mu_\epsilon^\gm$ contained inside $\gM$ and we have $\forall t > 0$
\begin{flalign}
    \langle \frac{\vtheta_t}{\|\vtheta_t\|}, \frac{\vm_t}{\|\vm_t\|} \rangle 
    &\geq \kappa > 0
\end{flalign}
This implies, $\forall t > 0$
\begin{flalign}
    \|\vtheta_t\| \leq \frac{r}{\kappa}
\end{flalign}

Expanding the value of $\vtheta_t$ we have:
\begin{flalign}
    \bigg\|\vtheta_k\bigg\| = \bigg\|\vtheta_0  + k \Mu^\gm_\epsilon - \sum_{i=1}^k \vx_k \bigg\| \leq \frac{r}{\kappa}
\end{flalign}
Apply Cauchy Schwartz inequality:
\begin{flalign}
    \bigg\|k \Mu^\gm_\epsilon - \sum_{i=1}^k \vx_k \bigg\|
    &\leq \bigg\|\vtheta_0\bigg\| + \frac{r}{\kappa} 
\end{flalign}
normalizing both sides by number of iterations $k$
\begin{flalign}
    \bigg\|\Mu^\gm_\epsilon - \frac{1}{k}\sum_{i=1}^k \vx_k \bigg\|
    &\leq \frac{1}{k}\bigg(\bigg\|\vtheta_0\bigg\| + \frac{r}{\kappa}\bigg) 
\end{flalign}
Thus, we have that $\gmm$ converges to $\Mu^\gm_\epsilon$ at the rate $\gO(\frac{1}{k})$. 

Combining this with~\cref{lemma:gm_mean_bound}, completes the proof. 
\end{proof}
\end{document}